\DeclareRobustCommand{\ie}{i.e.,\@\xspace}
\DeclareRobustCommand{\wrt}{w.r.t.\@\xspace}
\newcommand{\one}[1]{\mathbbm{1} \left(#1 \right)}
\DeclareMathOperator*{\argmax}{\arg\,\max}
\newcommand{\ucb}{{\small\textsc{UCB}}\xspace}
\newcommand{\cucb}{{\small\textsc{CUCB}}\xspace}
\newcommand{\linucb}{{\small\textsc{LinUCB}}\xspace}
\newcommand{\clinucb}{{\small\textsc{CLUCB}}\xspace}
\newcommand{\ccucb}{{\small\textsc{CUCB}2}\xspace}
\newcommand{\cclinucb}{{\small\textsc{CLUCB}2}\xspace}
\newcommand{\cclinucbc}{{\small\textsc{CLUCB2T}}\xspace}
\newcommand{\ccucbc}{{\small\textsc{CUCB2T}}\xspace}
\newcommand{\wt}[1]{\widetilde{#1}}
\newcommand{\wh}[1]{\widehat{#1}}
\tikzstyle{every picture}+=[remember picture]
\newtheorem{theorem}{Theorem}
\newtheorem{lemma}{Lemma}
\newtheorem{assumption}{Assumption}
\title{Improved Algorithms for Conservative Exploration in Bandits}
\author{
Evrard Garcelon,\textsuperscript{\rm 1}
Mohammad Ghavamzadeh,\textsuperscript{\rm 1}
Alessandro Lazaric,\textsuperscript{\rm 1}
Matteo Pirotta,\textsuperscript{\rm 1}\\
\textsuperscript{\rm 1}Facebook AI Research,\\
evrard.garcelon@gmail.com, mgh@fb.com, lazaric@fb.com,
pirotta@fb.com
}
\begin{document}

\maketitle

\begin{abstract}
In many fields such as digital marketing, healthcare, finance, and robotics, it is common to have a well-tested and reliable baseline policy running in production (e.g., a recommender system). Nonetheless, the baseline policy is often suboptimal. In this case, it is desirable to deploy online learning algorithms (e.g., a multi-armed bandit algorithm) that interact with the system to learn a better/optimal policy \textit{under the constraint} that during the learning process the performance is almost never worse than the performance of the baseline itself. In this paper, we study the \textit{conservative learning} problem in the contextual linear bandit setting and introduce a novel algorithm, the Conservative Constrained \linucb (\cclinucb). We derive regret bounds for \cclinucb that match existing results and empirically show that it outperforms state-of-the-art conservative bandit algorithms in a number of synthetic and real-world problems. Finally, we consider a more realistic constraint where the performance is verified only at predefined checkpoints (instead of at every step) and show how this relaxed constraint favorably impacts the regret and empirical performance of \cclinucb.
	
\end{abstract}
%
\section{Introduction}
\label{sec:intro}

Many problems in fields such as digital marketing, healthcare, finance, and robotics can be formulated as decision-making under uncertainty. Although many learning algorithms have been developed to find a good/optimal policy for these problems, a major obstacle in using them in real-world applications is the lack of guarantees for the actual performance of the policies they execute over time. Therefore, for the applicability of these algorithms, it is important that they execute policies that are guaranteed to perform at least as well as an existing {\em baseline}. We can think of the baseline either as a baseline value or the performance of a baseline policy. It is important to note that since the learning algorithms generate these polices from data, they are random variables, and thus, all the guarantees on their performance should be in high probability. This problem has been recently studied under the general title of {\em safety w.r.t.~a baseline} in bandits and reinforcement learning (RL), in both {\em offline}~\cite{Bottou13CR,Thomas15HCO,Thomas15HCP,Swaminathan15CR,Petrik16SP} and {\em online}~\cite{mansour2015bayesian,WuSLS16,KazerouniGAR17,Katariya2019interleaving} settings. 

In the online setting, which is the focus of this paper, the learning algorithm updates its policy while interacting with the system. Although the algorithm eventually learns a good or an optimal policy, there is no guarantee on the performance of the intermediate policies, especially at the very beginning, when the algorithm needs to heavily explore different options. Therefore, in order to make sure that at any point in time the (cumulative) performance of the policies generated by the algorithm is not worse than the baseline, it is important to control the exploration and make it more {\em conservative}. Consider a recommender system that runs our learning algorithm. Although we are confident that our algorithm will eventually learn a strategy that performs as well as the baseline, and possibly even better, we should control its exploration not to lose too many customers, as a result of providing them with unsatisfactory recommendations. This setting has been studied in multi-armed bandits~\cite{WuSLS16}, contextual linear bandits~\cite{KazerouniGAR17}, and stochastic combinatorial semi-bandits~\cite{Katariya2019interleaving}. These papers formulate the problem using a constraint defined based on the performance of the baseline policy (mean of the baseline arm in the multi-armed bandit case), and modify the corresponding UCB-type algorithm~\cite{Auer02FA} to satisfy this constraint. At each round, the conservative bandit algorithm computes the action suggested by the corresponding UCB algorithm, if the action satisfies the constraint, it is taken, otherwise, the algorithm acts according to the baseline policy. Another algorithm in the online setting is by~\cite{mansour2015bayesian} that balances exploration and exploitation such that the actions taken are compatible with the agent's (customer's) incentive formulated as a Bayesian prior.

In this paper, we focus on UCB-type algorithms and improve the design and empirical performance of the conservative algorithms in the contextual linear bandit setting. We first highlight the limitations of the existing conservative bandit algorithms~\cite{WuSLS16,KazerouniGAR17} and show that simple modifications in constructing the conservative condition and the arm-selection strategy may significantly improve their performance. We show that our algorithm is formally correct by proving a regret bound, matching existing results and illustrate its practical advantage w.r.t.\ state-of-the-art algorithms in a number of synthetic and real-world environments. Finally, we consider the more realistic scenario where the conservative constraint is verified at predefined checkpoints (e.g., a manager may be interested in verifying the performance of the learning algorithm every few days). In this case, we prove a regret bound showing that as the checkpoints become less frequent, the conservative condition has less impact on the regret, which eventually reduces to the standard (unconstrained) one.
\section{Conservative Contextual Linear Bandits}
\label{sec:clucb}

We consider the standard linear bandit setting. At each time $t$, the agent selects an arm $a_t \in \mathcal{A}_t$ and observes a reward 
\begin{align}\label{eq:linear.bandit}
        r_{a}^t = \left\langle \theta^{\star}, \phi_{a}^t \right\rangle + \eta_{a}^{t}
:= \mu_a^t + \eta_{a}^{t},
\end{align}
where $\theta^{\star} \in \mathbb{R}^{d}$ is a parameter vector, $\phi_{a}^t \in \mathbb{R}^{d} $ are the features of arm $a$ at time $t$, and $\eta_{a}^{t}$ is a zero-mean  $\sigma^2$-subgaussian noise. 
When the features correspond to the canonical basis, this formulation reduces to multi-armed bandit (MAB) with $d$ arms. In the more general case, the features may depend on a context $x_t$, so that $\phi_a^t = \phi(x_t, a)$ denotes the feature vector of a context-action pair $(x_t, a)$ and~\eqref{eq:linear.bandit} defines the so-called linear contextual bandit setting.

We rely on the following standard assumption on the features and the unknown parameter $\theta^\star$.
\begin{assumption}\label{asm:environment} 
        There exist $B,D \geq 0$, such that $\|\theta^\star\|_2 \leq B$, $\|\phi_a^t\| \leq D$, and $\langle \theta^\star, \phi_a^t \rangle \in [0,1]$, for all $t$ and $a$.
\end{assumption}
Given a finite horizon $n$, the performance of the agent is measured by its (pseudo)-\emph{regret}:
\[
        R(n) = \sum_{t=1}^n \langle \theta^\star, \phi_{a^\star}^t \rangle - \langle \theta^\star, \phi_{a_t}^t \rangle ,
\]
where $a^{\star}_{t} \in \argmax_{a} \langle \theta^\star, \phi_{a}^t \rangle$ is the optimal action at time $t$.
In the conservative setting, the objective is to minimize the regret under additional performance constraints w.r.t.\ a known baseline. We assume the agent has access to a \emph{baseline policy}, which selects action $b_t$ at time $t$.\footnote{In the non-contextual case, the baseline policy reduces to a single baseline action $b$.} 
The learning problem is constrained such that, at any time $t$, the difference in performance (\ie expected cumulative reward) between the baseline and the agent should never fall below a predefined fraction of the baseline performance.
Formally, the \emph{conservative constraint} is given by
\begin{equation}\label{eq:clucb_constr}
        \forall t > 0, \qquad
        \sum_{i = 1}^{t}\mu_{a_{i}}^i \geq (1 - \alpha) \sum_{i=1}^t \mu_{b_i}^i,
\end{equation}
where $\alpha \in (0,1)$ is the conservative level.
As the LHS of~\eqref{eq:clucb_constr} is a random variable depending on the agent's strategy, we require this constraint to be satisfied with high probability.
Finally, in order to keep the presentation and analysis simple, we rely on the following assumption.

\begin{assumption}\label{asm:basline}
For any $t>0$, the performance of the baseline strategy until $t$ is known, i.e., $\sum_{i=1}^t \mu_{b_i}^i$ can be evaluated by the agent.
\end{assumption}

This assumption is often reasonable since the baseline performance can be estimated from historical data (see Rem. 3 in~\cite{KazerouniGAR17}). Furthermore, as shown in~\cite{WuSLS16,KazerouniGAR17}, this knowledge can be removed and the algorithm can be modified to incorporate the estimation process (and preserve the same order of regret). 

\begin{figure}[t]
	\includegraphics[width=\columnwidth]{./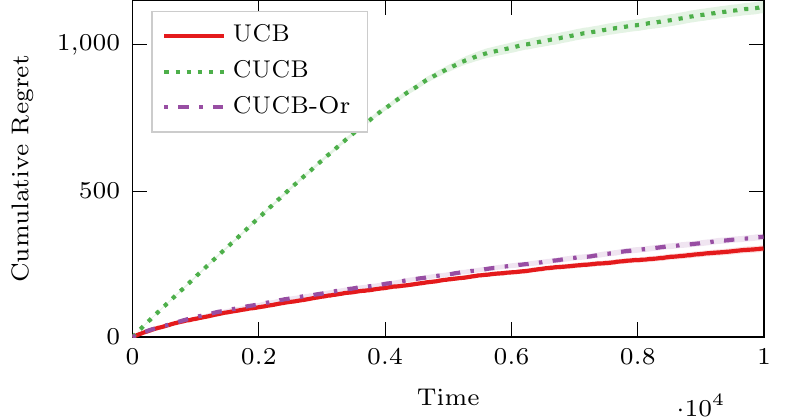}
	\caption{Comparison of the cumulative regret between \ucb, its conservative variant (\cucb), and an \textit{oracle} version of \cucb, where~\eqref{eq:clucb_constr} can be evaluated exactly to decide whether to select the UCB arm or the baseline.}
	\label{fig:oracle_mab}
\end{figure}


\noindent
\textbf{Conservative Exploration.} Conservative exploration algorithms~\cite{WuSLS16,KazerouniGAR17} are based on a two-step process to select the action to play. 
In the first step, they compute an optimistic action based on the optimism-in-the-face-of-uncertainty principle, \ie using \ucb~\cite{auer2002finite} or \linucb~\cite{li2010contextual,abbasi2011improved}, which is effective in exploring and minimizing the regret over time. 
In the second step, they evaluate the conservative condition by replacing the unknown mean with a statistical lower bound.
If the condition is verified, they play the optimistic action, otherwise, they act conservatively by selecting the baseline $b_t$. Playing the baseline over multiple steps contributes to ``build a conservative budget'', so that condition \eqref{eq:clucb_constr} is more likely to be satisfied by the UCB arm, and thus, allowing to execute explorative actions.

Formally, let $S^b_{t}$ be the set of times up to $t$ (included) where the agent played the baseline and $S_{t-1} = [t] \setminus S^b_{t}$ be the complementary set, i.e., when the agent played the \ucb action. 
\clinucb uses the information collected when playing non-conservatively to build an estimator of $\theta^{\star}$ by solving a regularized least-square problem
$ \wh{\theta}_t = (\Phi_t \Phi_t^\top + \lambda I)^{-1} \Phi_t Y_t$, 
where $\lambda > 0$, $\Phi_t = (\phi_{a_i}^i)_{i \in S_{t-1}} \in \mathbb{R}^{d \times |S_{t-1}|}$ and $Y_t = (r_{a_i}^i)_{i \in S_{t-1}} \in \mathbb{R}^{|S_{t-1}|}$.
Denote by $V_t = \lambda I + \Phi_t \Phi_t^\top$ the design matrix of the regularized least-square problem and by $\|x\|_{V} = \sqrt{x^\top V x}$ the weighted norm \wrt any positive matrix $V \in \mathbb{R}^{d \times d}$.
We define the confidence set
$\Theta_t = \{\theta \in \mathbb{R}^d \,:\, \|\theta - \widehat{\theta}_t\|_{V^{-1}_t} \leq \beta_t\}$
where 
\begin{equation}
        \label{eq:beta_linucb}
        \beta_t = \sigma\sqrt{d\log\left( \frac{1 + D^2(1+|S_{t-1}|)/\lambda }{\delta} \right)} + B\sqrt{\lambda},
\end{equation}
which guarantees that $\theta^{\star}\in \Theta_t$, for all $t>0$, w.p.\ $1-\delta$.

Similar to \linucb, the optimistic action is computed as
\[
        a_t \in \argmax_{a \in \mathcal{A}_t} \max_{\theta \in \Theta_t} \langle \theta, \phi_{a}^t \rangle,
\]
and \clinucb decides if the action is ``safe'' by evaluating the following conservative condition:

\begin{small}
\begin{equation}
        \label{eq:clucb_condition}
\sum_{i \in S^b_{t-1}} \mu_{b_i}^i +
        \min_{\theta \in \Theta_t} 
        \Big \langle \theta, \phi_{a_t}^t + \sum_{i \in S_{t-1}} \phi_{a_i}^i \Big \rangle 
        \geq (1-\alpha) \sum_{i = 1}^t \mu_{b_i}^i.
\end{equation}
\end{small}

        The leftmost term in~\eqref{eq:clucb_condition} represents the expected cumulative reward associated to baseline actions played up to time $t-1$. The second term --minimization problem-- denotes the lower bound to the cumulative reward of optimistic actions, including the current optimistic action $a_t$.
        This lower bound is constructed using the most recent confidence set $\Theta_t$.
        The rightmost term denotes the expected cumulative reward of playing the baseline policy at each step.
        This inequality is a surrogate for the conservative condition in~\eqref{eq:clucb_constr}.

\noindent If the condition is satisfied, then the optimistic action $a_t$ is played, otherwise, the baseline strategy is selected and the corresponding action $b_t$ is executed.

\paragraph{Limitations.}
While \clinucb enjoys strong regret guarantees (in the MAB setting, it is indeed near-optimal), its empirical behavior is often over-conservative, i.e.,~the baseline strategy is selected for a very long time to build enough {\em conservative budget} before the actual exploration takes place. We identify two main algorithmic causes for such behavior. 

First, in building the conservative condition~\eqref{eq:clucb_condition}, \cucb and \clinucb rely on possibly loose statistical lower-bounds for the mean of the actions selected so far. This is well illustrated by the simulation in Fig.~\ref{fig:oracle_mab} in the MAB setting, where we report the performance of \ucb, \cucb, and an \textit{oracle} variant of \cucb, when the conservative condition~\eqref{eq:clucb_constr} is evaluated exactly (i.e., no lower-bound is used). 
While the \textit{oracle} version has almost the same regret as \ucb, and thus, showing that the conservative condition itself does not have a major impact on the exploration of \ucb, \cucb has a much higher regret. This shows that possibly loose estimates of the conservative condition have a significant impact on the regret. 
In fact, tightening the conservative condition would allow selecting the baseline strategy only when it is ``strictly'' needed, thus, reducing the conservative steps and improve the overall exploration performance.

Second, the two-step selection strategy of \cucb and \clinucb performs either an exploration step, when the UCB action is selected, or a conservative step, when the baseline is executed. Such sharp division between exploration and conservative steps may be unnecessary, as other actions may still be ``safe'' (i.e., satisfying the conservative condition), and thus, contribute to build ``conservative budget'', and, at the same time, be useful for exploration (e.g., optimistic), despite not being the UCB action. Exploiting such arms may lead to a better performance.

Finally, the conservative condition~\eqref{eq:clucb_constr} itself is often too strict in practice. Instead of performing almost as well as the baseline \textit{at every step}, it is more likely that recurrent ``checkpoints'' are set at which the agent is required to meet the condition. In this case, the agent may have extra time to perform exploratory actions and possibly recover from bad past choices when getting close to the conservative checkpoint.

In the next section, we address the two algorithmic limitations described above, while we later illustrate how a relaxed conservative condition may indeed allow an agent to achieve much smaller regret.



\section{Improved Conservative Exploration}
In this section we present Conservative Constrained \linucb (\cclinucb) (Alg.~\ref{alg:clucb2} with $T=1$), an improved conservative exploration algorithm for contextual linear bandit. 
All the proofs can be found in the extended version.


\begin{figure}[t]
                \begin{minipage}{.44\columnwidth}
        \begin{tikzpicture}
                \coordinate (a1) at (0,0);
                \coordinate (a2) at (20pt,10pt);
                \coordinate (a3) at (60pt,-25pt);
                \coordinate (a4) at (40pt,5pt);
                \coordinate (a5) at (80pt,-40pt);
                \draw[|-|, blue] ($(a1)-(0,45pt)$) -- +(0,90pt);
                \draw[|-|, ForestGreen] ($(a2)-(0,25pt)$) -- +(0,50pt);
                \draw[|-|] ($(a4)-(0,12pt)$) -- +(0,24pt);
                \draw[|-|] ($(a5)-(0,5pt)$) -- +(0,10pt);
                \draw[fill=blue, blue] (a1) circle(2pt) node[right] {$\mu_1$};
                \draw[fill=ForestGreen, ForestGreen] (a2) circle(2pt) node[right] {$\mu_2$};
                \draw[fill=black] (a4) circle(2pt) node[right] {$\mu_3$};
                \draw[fill=red, red] (a3) circle(2pt) node[below, shift={(-10pt,-4pt)}, red] {$\mu_4 := \mu_b$};
                \draw[fill=black] (a5) circle(2pt) node[right] {$\mu_5$};
        \end{tikzpicture}
        \end{minipage}
        \hfill
        \begin{minipage}{.52\columnwidth}
                \begin{tikzpicture}[scale=0.4]
                        \coordinate (elcenter) at (6,6);
                        \coordinate (a1) at (9,9);
                        \coordinate (a2) at (10,7);
                        \coordinate (a3) at (4.5,7.5);
                        \coordinate (a4) at (2.8,2.8);
                        \coordinate (a5) at (0.8,2.1);
                        \draw[help lines, color=gray!30, dashed] (-0.1,-0.1) grid (9.9,9.9);
                        \draw[->] (-0.1,0)--(10.5,0);
                        \draw[->] (0,-0.1)--(0,10);
                        \draw[name path=ellipse,rotate=-45, Lavender, fill=Lavender!6] (elcenter) ellipse (1 and 3);
                        \path[name path=l1] ($(elcenter) + (-1,1)$) -- ($(elcenter)$);
                        \path[name path=l2] ($(elcenter) + (1,-1)$) -- ($(elcenter)$);
                        \path[name intersections={of = ellipse and l2, by=E1}];
                        \path[name intersections={of = ellipse and l1, by=E2}];
                        \draw[-, Lavender] (E1) -- (E2);
                        \path[name path=l1] (0,0) -- (elcenter);
                        \path[name path=l2] (9,9) -- (elcenter);
                        \path[name intersections={of = ellipse and l2, by=E1}];
                        \path[name intersections={of = ellipse and l1, by=E2}];
                        \draw[-, Lavender] (E1) -- (E2);

                        \draw[-, ForestGreen] ($(0,0)!(E2)!(a2)$) -- (E2);

                        \draw[dashed,ForestGreen] (0,0) -- (a2);
                        \draw[-,red] (0,0) -- (a4);
                        \path[name path=l1] (0,0) -- (12,9.3); 
                        \path[name path=l2] (7,4) -- (7,8); 
                        \path[name intersections={of = l1 and l2, by=E1}];
                        \draw[-] (0,0) -- (E1);
                        \path[name path=l1] (0,0) -- (8,10.3); 
                        \path[name path=l2] (5.5,4) -- (5.5,8); 
                        \path[name intersections={of = l1 and l2, by=E1}];
                        \draw[-] (0,0) -- (E1);

                        \draw[Orchid,fill=Orchid] (elcenter) circle(3pt) node[above,xshift=3pt] {$\widehat{\theta}$};
                        \draw[blue, fill=blue] (a1) circle(3pt) node[above] {$a_1$};
                        \draw[ForestGreen, fill=ForestGreen] (a2) circle(3pt) node[above] {$a_2$};
                        \draw[red, fill=red] (a4) circle(3pt) node[above] {$a_4 = b$};
                        \draw[black, fill=black] (a3) circle(3pt) node[above] {$a_3$};
                        \draw[black, fill=black] (a5) circle(3pt) node[above] {$a_5$};
                \end{tikzpicture}
        \end{minipage}
        \caption{Examples of settings where the UCB arm (blue) does not satisfy the conservative condition but there is another ``safe'' arm to play (rather than the baseline).}
        \label{fig:issue_opt}
\end{figure}
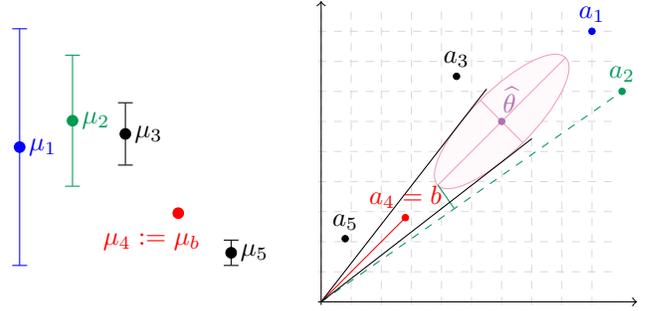

\subsection{\cclinucb}
The first improvement w.r.t.\ \clinucb is relative to the conservative condition~\eqref{eq:clucb_condition}. When evaluating the rewards accumulated by the agents so far, we rely on the fact that the sequence $(r_{a_i}^i - \mu_{a_i}^i)_{i \in S_{t-1}}$ is a Martingale Difference Sequence (MDS) with respect to the filtration $\mathcal{F}_{t-1} = \sigma\left( (a_{j}, \phi_{a_j}^j, r_{a_j}^j)_{j \in S_{t-1}}\right)$, \ie the history strictly before time $t$.
Indeed, the choice of arm $a_{i}$ is $\mathcal{F}_{i}$-measurable and for all $i \in S_{t-1}$ :
\begin{align*}
\mathbb{E}\left[ r_{a_{i}}^i - \mu_{a_{i}}^i \mid \mathcal{F}_{i} \right] = \sum_{ j = 1}^{|\mathcal{A}_i|} \one{ a_{i} = j } \left( \mathbb{E}\left[ r_{j}^i\right] - \mu_{j}^i \right) = 0
\end{align*} 
By using Freedman's inequality~\cite{freedman1975tail} for martingales, with probability at least $1-\delta$ we have
\begin{align}
\Big|\!\!\sum_{i \in S_{t-1}} (r_{a_i}^i - \mu_{a_i}^i) \Big| \leq \psi_L(t) \!:=
 \sigma \sqrt{2|S_{t-1}| L_t^\delta} + \frac{2}{3} L_t^\delta 
\label{eq:martingale}
\end{align}
where $L_t^\delta := \log \left(3\sfrac{\big(\big|S_{t-1}\big| \lor 1\big)^2}{\delta} \right)$. 
Thus we replace~\eqref{eq:clucb_condition} by
%

\begin{small}
\begin{equation}
\label{eq:clucb_condition2}
\sum_{i \in S^b_{t-1}} \mu_{b_i}^i + \sum_{i \in S_{t-1}} r_{a_i}^i - \psi_L(t) + 
\min_{\theta \in \Theta_t} 
\big \langle \theta, \phi_{a_t}^t \big \rangle 
\geq (1-\alpha) \sum_{i = 1}^t \mu_{b_i}^i.
\end{equation}
\end{small}

\noindent While it is not possible to prove that~\eqref{eq:clucb_condition2} is always tighter, in the next section we provide an extensive discussion on the potential improvements.

A second limitation of \clinucb is its two-step approach to action selection, in which either the optimistic action satisfies the conservative condition or the baseline $b_t$ is selected. The idea behind this strategy is that it is necessary to select baseline actions before exploring any other action in order to build a ``conservative budget'', which allows performing effective exploration later on (once the conservative condition is met). In \cclinucb we propose to combine the explorative and conservative requirements by selecting the most optimistic (i.e., useful for exploration) ``safe'' (i.e., satisfying the conservative condition) action. Formally, the algorithm computes the set $\mathcal{C}_{t}$ of ``safe'' arms  such that\footnote{$S_{t-1}$ contains all steps when the baseline is \textit{not} selected, and now it may include arms different from the \ucb arm.}
\begin{align}\label{eq:safety_set_definition}
\mathcal{C}_{t} = \Big\{ &a\in \mathcal{A}_{t}\setminus \{ b_{t}\} \mid  \sum_{i \in S_{t-1}} r_{a_{i}}^i - \psi_{L}(t) + \sum_{i \in S_{t-1}^{b}} \mu_{b_i}^i\nonumber \\
&+ \max\big\{\min_{\theta \in \Theta_t} \langle \theta, \phi_{a}^t \rangle, 0\big\}\geq (1 - \alpha)\sum_{i=1}^t \mu_{b_i}^i\Big\}
\end{align}
where $\psi_L$ is the Martingale bound given in Eq.~\ref{eq:martingale}. 
The algorithm plays the arm that solves the following constrained optimization problem:
\begin{equation}
\label{eq:safety_set_linear}
\begin{aligned}
a_t \in \arg\max\Big\{r_{b_{t}}^{t}, \max_{a \in\mathcal{C}_{t}}  \max_{\theta \in \Theta_t} \langle \theta, \phi_a^t \rangle\Big\}
\end{aligned}
\end{equation}
where, by definition, the max over an empty set is $-\infty$. The maximizer is either the baseline arm $b_t$ or an arm in $\mathcal{C}_t$ that is optimistic \wrt the baseline.
%
In order to illustrate the idea behind~\eqref{eq:safety_set_linear}, consider the configuration illustrated in Fig.~\ref{fig:issue_opt}\emph{(left)} for a MAB setting. If the algorithm has not built enough margin, the UCB arm ($a_1$) would not satisfy the conservative condition as its lower-confidence bound is well below the baseline. As a consequence, \cucb selects the baseline arm $a_4$. A direct improvement can be achieved by selecting arm $a_3$ (i.e., the one with the higher lower bound among the arms passing the conservative condition) as suggested in~\cite{WuSLS16}. Nonetheless, while arm $a_3$ is indeed better than baseline and it allows building conservative budget faster, it may not be effective from an exploration point of view. In~\eqref{eq:safety_set_linear} we suggest arm $a_2$ would be a better choice as it does not give up on reducing the regret (i.e., it has a larger UCB than $a_2$). This may indeed result in a better tradeoff between building conservative budget and performing effective exploration. 
Finally, Fig.~\ref{fig:issue_opt}\emph{(right)} shows that~\eqref{eq:safety_set_linear} may be effective even in the linear setting. The stretched out ellipsoid on one axis gives a precise estimate of some bad arms, while good arms would not be selected by choosing the arm maximizing the lower bound. Even though the arms are more correlated due to the linear structure, the interpretation of this case is as the one for stochastic MABs.

From a computational perspective, the cost of an update for both \clinucb and \cclinucb is $O(A d^3)$.
This complexity comes from the maximization over action and the construction of the confidence intervals.
Compared to \clinucb, \cclinucb has to evaluate the conservative condition for each arm instead of only for the \ucb arm.
However, the cost of this operation is dominated by the arm selection procedure.

Finally, we notice that following the same construction as in~\cite{KazerouniGAR17}, \cclinucb can be easily adapted to the case when Asm.~\ref{asm:basline} does not hold and the baseline performance needs to be estimated online. 

\begin{algorithm}[t]
        \DontPrintSemicolon
\KwIn{$\alpha$, $\delta$, $T$}{}
Set $S_0 = S_0^b = \emptyset$, $k=0$\;
 \For{$t = 1, \ldots,n$}{
         Compute ``safe'' set $C_t$ as in Eq.~\ref{eq:safety_set_definition} or Eq.~\ref{eq:checkpoint_safety_set} \;
         Compute $a_t$ by solving~\eqref{eq:safety_set_linear} \;
         Pull arm $a_t$ and observe $r_{a_t}^t$\;
         \eIf{$a_{t} \neq b_{t}$}{
         	Set $S_t = S_{t-1} \cup \{t\}$, $S^b_{t} = S^b_{t-1}$\;
         	Compute new confidence set $\Theta_{t+1}$}
		{
		Set $S_t = S_{t-1}$, $S^b_{t} = S^b_{t-1}$, $\Theta_{t+1} = \Theta_{t}$\;
		}
 \If{$t \bmod T = 0$}{
         $k = k + 1$\;
 }
 }
 \caption{\cclinucb ($T=1$) and \cclinucbc}
 \label{alg:clucb2}
\end{algorithm}


\subsection{Theoretical Analysis}
Let $\Delta_{a}^t = \mu_{a^\star_t}^t - \mu_{a}^t$ be  the action gap at time $t$.
As in~\cite{KazerouniGAR17}, we rely on the following assumption.
\begin{assumption}\label{asm:baseline_performance}
There exists $0\leq \Delta_{l}\leq \Delta_{h}$ and $0< \mu_{l} \leq \mu_{h}$ such that for every $t$ :
\begin{align*}
\Delta_{l}\leq \Delta_{b_{t}}^{t} \leq \Delta_{h} \quad \text{ and } \quad \mu_{l}  \leq \mu_{b_{t}}^{t} \leq \mu_{h} 
\end{align*}
\end{assumption}
Asm.~\ref{asm:baseline_performance} ensures that the baseline policy has a minimum level of performance, which is reasonable since the baseline policy is the strategy currently used by default. 
Note that in MABs and linear bandits  $\mu_l = \mu_h = \mu_b$ since the performance does not depend on a system context. 
The terms $\Delta_l$ and $\mu_h$ are not critical quantities in the regret bound and it is possible to take $\Delta_{l} = 0$, $\mu_{h} = 1$.
We are now ready to state the following result for \cclinucb.

\begin{theorem}\label{thm:regret}
        For any contextual linear bandit problem, under Asm.~\ref{asm:environment},~\ref{asm:basline}, and~\ref{asm:baseline_performance}, \cclinucb satisfies the conservative condition with probability $1-\delta$ and its 
        regret 
        can be bounded for any $n >0$ with probability at least $1-\delta$ by
\begin{align}\label{eq:regret_cclinucb}
R&_{\cclinucb}(n) \leq O\Bigg( \sigma d \log\left(\frac{nD^{2}}{\lambda d}\right) \sqrt{n} \\
&+ \frac{\Delta_h d^{2}}{(\alpha \mu_{l})^{2}}\big( \sqrt{\lambda}B + \sigma\big)^{2} \log\Bigg( \frac{d^{2}\big( \sqrt{\lambda}B + \sigma\big)^{2}\sqrt{D_{0}}}{\sqrt{\delta}\alpha \mu_{l}}\Bigg)^{2} \Bigg)\nonumber
\end{align}
where $D_{0} := \max\big\{ 2D^{2}/\lambda, 3\big\}$.
\end{theorem}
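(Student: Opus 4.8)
The plan follows the two‑phase template of~\cite{WuSLS16,KazerouniGAR17}: first show that \cclinucb never violates~\eqref{eq:clucb_constr}, then split the regret into ``optimistic'' rounds (contributing the usual \linucb regret) and ``conservative'' rounds, bounding the number of the latter by a horizon‑independent quantity. Write $U_t(a):=\max_{\theta\in\Theta_t}\langle\theta,\phi_a^t\rangle$ and $U_t^-(a):=\min_{\theta\in\Theta_t}\langle\theta,\phi_a^t\rangle$, so $U_t(a)-U_t^-(a)=2\beta_t\|\phi_a^t\|_{V_t^{-1}}$, and let $\mathcal{E}$ be the event on which $\theta^\star\in\Theta_t$ for all $t$ (\wp $1-\delta$ by the choice of $\beta_t$ in~\eqref{eq:beta_linucb}) and~\eqref{eq:martingale} holds for all $t$; a union bound (each sub‑event run at level $\delta/2$) gives $\mathbb{P}(\mathcal{E})\ge 1-\delta$, and I would argue on $\mathcal{E}$. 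I would prove~\eqref{eq:clucb_constr} by induction on $t$: if $a_t=b_t$ it follows from the inductive hypothesis and $\mu_{b_t}^t\ge(1-\alpha)\mu_{b_t}^t$; if $a_t\neq b_t$ then $a_t\in\mathcal{C}_t$, and on $\mathcal{E}$ both empirical quantities in~\eqref{eq:safety_set_definition} dominate their population counterparts, since $\sum_{i\in S_{t-1}}r_{a_i}^i-\psi_L(t)\le\sum_{i\in S_{t-1}}\mu_{a_i}^i$ by~\eqref{eq:martingale} and $\max\{U_t^-(a_t),0\}\le\mu_{a_t}^t$ (using $U_t^-(a_t)\le\langle\theta^\star,\phi_{a_t}^t\rangle$ and $\mu_{a_t}^t\ge 0$ from Asm.~\ref{asm:environment}); substituting into~\eqref{eq:safety_set_definition} and using $S_{t-1}\cup S^b_{t-1}=[t-1]$ yields $\sum_{i=1}^t\mu_{a_i}^i\ge(1-\alpha)\sum_{i=1}^t\mu_{b_i}^i$.

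\noindent\textbf{Regret decomposition.} Let $\wt{a}_t\in\argmax_a U_t(a)$ and partition $[n]$ into $\mathcal{T}_{\mathrm o}=\{t:\wt{a}_t\in\mathcal{C}_t\}$ and $\mathcal{T}_{\mathrm c}=\{t:\wt{a}_t\notin\mathcal{C}_t\}$. On $\mathcal{T}_{\mathrm o}$ the algorithm plays a global UCB maximizer, because $U_t(\wt{a}_t)\ge U_t(a^\star_t)\ge\mu_{a^\star_t}^t\ge\mu_{b_t}^t$ beats the baseline term; hence optimism gives $\mu_{a^\star_t}^t-\mu_{a_t}^t\le U_t(a_t)-\mu_{a_t}^t\le 2\beta_t\|\phi_{a_t}^t\|_{V_t^{-1}}$. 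On $\mathcal{T}_{\mathrm c}$ (which, up to ties, contains all baseline rounds) the instantaneous regret is at most $\Delta_h+2\beta_t\|\phi_{a_t}^t\|_{V_t^{-1}}$: either $a_t=b_t$, giving $\Delta_{b_t}^t\le\Delta_h$, or $a_t\in\mathcal{C}_t$ with $U_t(a_t)\ge\mu_{b_t}^t$, so $\mu_{a_t}^t\ge\mu_{b_t}^t-2\beta_t\|\phi_{a_t}^t\|_{V_t^{-1}}$ and $\mu_{a^\star_t}^t-\mu_{a_t}^t=\Delta_{b_t}^t+(\mu_{b_t}^t-\mu_{a_t}^t)\le\Delta_h+2\beta_t\|\phi_{a_t}^t\|_{V_t^{-1}}$. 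Summing the $\beta_t\|\phi_{a_t}^t\|_{V_t^{-1}}$ terms over all non‑baseline rounds (a subset of $S_n$), Cauchy--Schwarz and the elliptical‑potential lemma give $\sum_{t\in S_n}2\beta_t\|\phi_{a_t}^t\|_{V_t^{-1}}\le 2\beta_n\sqrt{2d\,|S_n|\log(1+|S_n|D^2/(\lambda d))}=O\big(\sigma d\log(nD^2/(\lambda d))\sqrt{n}\big)$, the first term of~\eqref{eq:regret_cclinucb}. It then remains to bound $|\mathcal{T}_{\mathrm c}|$, and since $|\mathcal{T}_{\mathrm c}|\le\tau:=\max\mathcal{T}_{\mathrm c}$, it suffices to bound $\tau$.

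\noindent\textbf{Bounding $\tau$.} At round $\tau$ the condition in~\eqref{eq:safety_set_definition} fails for $\wt{a}_\tau$. Using $\sum_{i\in S_{\tau-1}}r_{a_i}^i-\psi_L(\tau)\ge\sum_{i\in S_{\tau-1}}\mu_{a_i}^i-2\psi_L(\tau)$ and $\max\{U_\tau^-(\wt{a}_\tau),0\}\ge U_\tau(\wt{a}_\tau)-2\beta_\tau\|\phi_{\wt{a}_\tau}^\tau\|_{V_\tau^{-1}}\ge\mu_{b_\tau}^\tau-2\beta_\tau\|\phi_{\wt{a}_\tau}^\tau\|_{V_\tau^{-1}}$, the failure rearranges to $Z_{\tau-1}<2\psi_L(\tau)+2\beta_\tau\|\phi_{\wt{a}_\tau}^\tau\|_{V_\tau^{-1}}$, where $Z_{\tau-1}:=\sum_{i=1}^{\tau-1}\mu_{a_i}^i-(1-\alpha)\sum_{i=1}^{\tau-1}\mu_{b_i}^i$. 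Conversely, splitting $Z_{\tau-1}$ over baseline and non‑baseline rounds and using $\mu_{a_i}^i-(1-\alpha)\mu_{b_i}^i\ge\alpha\mu_l$ on baseline rounds and $\ge\alpha\mu_l-2\beta_i\|\phi_{a_i}^i\|_{V_i^{-1}}$ on $i\in S_{\tau-1}$ (again from $U_i(a_i)\ge\mu_{b_i}^i$) gives $Z_{\tau-1}\ge\alpha\mu_l(\tau-1)-2\sum_{i\in S_{\tau-1}}\beta_i\|\phi_{a_i}^i\|_{V_i^{-1}}$. Combining the two bounds, and using $\psi_L(\tau)=O(\sigma\sqrt{\tau\log(\tau/\delta)}+\log(\tau/\delta))$, $\beta_\tau=O(\sigma\sqrt{d\log(D_0\tau/\delta)}+B\sqrt{\lambda})$, $\|\phi_{\wt{a}_\tau}^\tau\|_{V_\tau^{-1}}\le D/\sqrt{\lambda}$, and $\sum_{i\in S_{\tau-1}}\beta_i\|\phi_{a_i}^i\|_{V_i^{-1}}=O(\beta_\tau\sqrt{d\tau\log(D_0\tau)})$, yields an inequality of the shape $\alpha\mu_l\,\tau\lesssim d(\sqrt{\lambda}B+\sigma)\sqrt{\tau}\,\log(D_0\tau/\delta)$, i.e.\ $\tau\lesssim A\log^2(c\tau)$ with $A=\tfrac{d^2(\sqrt{\lambda}B+\sigma)^2}{(\alpha\mu_l)^2}$; solving this implicit inequality gives the horizon‑free bound $\tau=O\!\Big(\tfrac{d^2(\sqrt{\lambda}B+\sigma)^2}{(\alpha\mu_l)^2}\log^2\!\big(\tfrac{d^2(\sqrt{\lambda}B+\sigma)^2\sqrt{D_0}}{\sqrt{\delta}\,\alpha\mu_l}\big)\Big)$. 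Plugging $\Delta_h|\mathcal{T}_{\mathrm c}|\le\Delta_h\tau$ into the decomposition yields~\eqref{eq:regret_cclinucb}.

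\noindent\textbf{Main obstacle.} The hard part is the last step: making the bound on $\tau$ independent of the horizon $n$. This works only because (i) the martingale term $\psi_L(\tau)$ scales like $\sqrt{|S_{\tau-1}|}\le\sqrt{\tau}$ rather than linearly in the number of explorative rounds — exactly the gain of the new condition~\eqref{eq:clucb_condition2} over~\eqref{eq:clucb_condition}, where a per‑step lower bound would contribute $\Omega(|S_{\tau-1}|)$ — and (ii) $\sum_{i\in S_{\tau-1}}\beta_i\|\phi_{a_i}^i\|_{V_i^{-1}}$ is also only $O(\beta_\tau\sqrt{\tau d\log\tau})$; so $\tau$ enters linearly on one side and as $\sqrt{\tau}\cdot\mathrm{polylog}(\tau)$ on the other, and since $\psi_L(\tau)$ and $\beta_\tau$ only involve $|S_{\tau-1}|\le\tau$ and never $n$, the implicit inequality closes. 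A secondary technicality concerns~\eqref{eq:safety_set_linear}: the baseline comparison must be against the known value of $\mu_{b_t}^t$ (Asm.~\ref{asm:basline}), or an upper bound on it, rather than the realized reward, so that any played safe arm $a_t\ne b_t$ satisfies $U_t(a_t)\ge\mu_{b_t}^t$ on every round — this is what makes $\mu_{a_t}^t\ge\mu_{b_t}^t-2\beta_t\|\phi_{a_t}^t\|_{V_t^{-1}}$ available uniformly, which both the regret decomposition and the lower bound on $Z_{\tau-1}$ rely on.
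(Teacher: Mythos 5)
Your proposal is correct and follows essentially the same route as the paper: the same decomposition into rounds where the UCB arm is safe versus rounds where it is not (the paper's $S_n^\star$, $S_n^+$, $S_n^b$), the same use of optimism \wrt the baseline to control non-baseline ``unsafe'' rounds, and the same self-bounding inequality extracted from the failed safety condition at the last unsafe round $\tau$, combined with the martingale bound and the elliptical potential lemma. The only (harmless) difference is in the last step: you bound $\tau$ itself and use $|\mathcal{T}_{\mathrm c}|\leq\tau$, whereas the paper splits $\alpha\mu_l\tau$ into two halves and maximizes a concave function (its Lemma~\ref{lem:bound_exp}) to bound $|S_n^b|+|S_n^+|$ directly; both yield the same $O\big(d^2(\sqrt{\lambda}B+\sigma)^2(\alpha\mu_l)^{-2}\log^2(\cdot)\big)$ order for the conservative term.
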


This regret is of the same order as the bound for \clinucb.\footnote{Notice that there is a typo in Thm.6 in~\cite{KazerouniGAR17}, as the denominator in the log term of $K$ should be $1$.} While this shows that the changes made to \clinucb are ``safe'', we cannot prove a direct improvement to the regret performance, apart from better constants (the regret of \clinucb is at least half of the one of \cclinucb, see appendix). Notice also that in the MAB case, this is not even possible in general, as \cucb is already proved to match the lower bound (in a worst-case sense).
However, a worst-case argument may be misleading in the ranking of the algorithms.
The empirical validation reported in the experimental section will provide a more direct evidence of the improvement of \cclinucb over \clinucb.
In the rest of the section we analyze the parts of the regret that are most directly impacted by the two algorithmic changes in \cclinucb.

\textbf{Discussion on martingale bound.}

{\color{black} An} interpretation for the $\sqrt{d}$-improvement comes from comparing~\eqref{eq:clucb_condition} and~\eqref{eq:clucb_condition2}.
The minimization in~\eqref{eq:clucb_condition} has a closed form solution given by $\langle \wh{\theta}_t, x\rangle - \beta_t \|x\|_{V_t^{-1}}$, with $x := \sum_{i \in S_{t-1}} \phi_{a_i}^i$.\footnote{We remove the contribution of the optimistic arm $a_t$ since it is the same when using martingale or self-normalizing bound.} While  $\langle \wh{\theta}_t, x\rangle \approx \sum_{i \in S_{t-1}} r_{a_i}^i$ since $\wh{\theta}_t$ solves the associated regularized least-square problem, $\beta_t \|x\|_{V_t^{-1}}= \wt{O}(\sigma\sqrt{d|S_{t-1}|})$, which is larger than the martingale term, which is of order $\wt{O}(\sigma \sqrt{|S_{t-1}|})$. The advantage of the martingale argument is that it avoids to explicitly use the linear structure of the reward by building a concentration for the sum of scalar values.
As shown above, this allows to derive a bound independent from the dimensionality of the linear parametrization. Nonetheless, notice that in evaluating the quality of the next arm, a minimization over $\theta$ is still needed in~\eqref{eq:clucb_condition2}, which brings back the dependency on $\sqrt{d}$ (but on a much smaller term) in the regret analysis, which eventually prevents us from proving an explicit advantage in the final bound. A similar reasoning can be derived for the MAB case (see appendix).

{\color{black} Another interpretation for this $\sqrt{d}$-improvement can be seen when looking at the regret.}
{\color{black} 
We start bounding the regret as:
\[
        R_{\cclinucb}(n) \leq \sum_{t \in S_n} \left(\mu_{a^\star}^t - \mu_{a_t}^t\right) + |S^b_n| \Delta_h
\]
In~\cite{KazerouniGAR17}, the first term is upper-bounded by the standard regret of \linucb, while the key step is to bound the regret $|S^b_n| \Delta_h$ incurred while playing the baseline. 
By exploiting the martingale bound, we can provide a tighter bound for $|S^b_n|$ compared to \clinucb.
In~\cite{KazerouniGAR17} (after Eq. 19), the authors shows that $\alpha \mu_l |S^b_n| \lesssim 114 d^2 c_1^2 / (\alpha \mu_l)$ where $c_1 = \sigma + \sqrt{\lambda}B$ (ignoring logarithmic terms).
By exploiting the martingale bound, we can show that $\alpha \mu_l |S^b_n| \lesssim 10 (\sigma + d c_1)/\sqrt{\alpha\mu_{l}} + 32 (\sigma + d c_1)^2 / (\alpha \mu_l)$ (see Eq.~\ref{eq:improvement_over_kaz}).
This already shows that we have a linear term in $d$ depending only on $1/\sqrt{\alpha}$ and a term quadratic in $d$ as in \clinucb with a much smaller constant. This is a big improvement compared to \clinucb that shows that the martingale indeed provides a $\sqrt{d}$-improvement (and also in $1/\sqrt{\alpha}$). Finally, if we take a very loose upper bound we obtain a term that is smaller by a factor at least $2$ compared to the one of CLUCB (formally we obtain that $\alpha \mu_l |S^b_n| \lesssim 48 d^2 c_1^2 / (\alpha\mu_l)$).
}

\textbf{Discussion on action selection.} The second difference in \cclinucb is the action selection process.  
Denote by $\wh{\mu}_i$ the empirical mean of arm $i$, let $\mathcal{A}^{\ucb}_t := \argmax_{i \in [K]} \{ \wh{\mu}_i + \psi^{\ucb}_t(i) \}$ be the set of UCB optimal arms, $\mathcal{A}^+_t := \{i \in [K] : \wh{\mu}_i + \psi^{\ucb}_t(i) \geq \mu^\star\}$ the set of optimistic arms and $\mathcal{C}_t$ the set of arms satisfying the conservative condition.
At any time $t$ we can define three events: $E_{1,t} = \{a_t \in \mathcal{A}^{\ucb}_t \wedge a_t \in \mathcal{C}_t \}$, $E_{2,t} = \{ a_t \neq b \wedge a_t \notin \mathcal{A}^{\ucb}_t \wedge a_t \in \mathcal{C}_t \}$ and $E_{3,t} = \{ a_t = b \}$. Following the two-step selection process of \cucb, only $E_{1,t}$ and $E_{3,t}$ can happen. In case $E_{1,t}$, the algorithm behaves like \ucb, thus performing exploration that contributes to reduce the regret over time. In case $E_{3,t}$, the regret is equal to $\Delta_b$ and no ``progress'' is made on the exploration side, but it serves in building conservative budget for later steps. In \cclinucb, event $E_{2,t}$ happens when the UCB arm is not ``safe'' to play (\ie $\mathcal{A}^{\ucb}_t \notin \mathcal{C}_t$) but there are other arms that are \emph{safe \wrt the baseline}. Interestingly, in this case $a_t$ is indeed \textit{optimistic} w.r.t.\ the baseline, \ie $a_t \in \mathcal{C}_t$ and $\wh{\mu}_i + \psi^\ucb_t(i) \geq \mu_b^{t}$. In analogy with the OFU principle for $a^\star$, this is a good strategy for performing efficient exploration \wrt the baseline policy. 
One source of improvement comes when $a_t$ is optimistic despite not being the UCB arm (\ie $a_t \in \mathcal{A}^+_t$ and $a_t \notin \mathcal{A}^\ucb_t$). In this case, the time step can be analyzed as in \ucb, thus reducing the number of pulls $T_b(n)$ to the baseline arm and its impact to the regret. This is likely to happen in earlier phases of the learning process, where the UCB of all arms tend to be optimistic (i.e., $a_t \in \mathcal{A}^+_t$). Even when $a_t$ is not optimistic w.r.t.\ $\mu^\star$, it may happen that $\mu_{a_t} > \mu_b$. In this case, while this step can be still considered as a ``conservative'', it would contribute for less than $\Delta_b$ regret. Unfortunately, it is difficult to provide theoretical evidence that such events happen often enough to provably reduce the regret. Nonetheless, the fact that the arm is optimistic \wrt the baseline (\ie $\wh{\mu}_i + \psi^\ucb_t(i) \geq \mu_b$) is sufficient to guarantee that the new action selection strategy is never worse than \cucb.

{\color{black}
We can provide an intuition of the impact of the new arm selection through a simple \emph{example}.
Consider the situation of $N>3$ arms, such that $\mu_i > \mu_{i+1}$, for any $i$. Assume we know the variance of the arms and we use Bernstein inequality {\color{black}in building confidence intervals}. Let $\sigma_2 =0$ and $(\mu_i, \sigma_i)_{i>3}$ such that the probability of being safe and better than arm $2$ is negligible. Then the regret can be decomposed by the normal \linucb term (due to event $E_1$), the pull of an arm that is optimistic \wrt the baseline (\ie event $E_2$) and the conservative play. The number of conservative play is thus further reduced due to event $E_2$, leading to a smaller contribution to the regret. Now, in this specific case, the arm played during event $E_2$ is w.h.p.\ always arm $2$. Since it is better than the baseline, we have a further improvement to the regret. To conclude, in this example, \cclinucb will have a regret strictly better than \clinucb.

}

\section{Checkpoint-based Conservative Exploration}

\cclinucb is designed to get a tighter proxy for~\eqref{eq:clucb_constr}, but it is still required to be conservative at any time $t$. This requirement is often too strict in practice, where the conservative condition may be verified only at some ``checkpoints'' over time. We study the case where the checkpoints are equally spaced every $T$ steps. 
We still assume that Asm.~\ref{asm:baseline_performance} holds and we redefine the conservative condition such that for some $\alpha \in [0,1]$ and $T \in \mathbb{N}^{\star}$ a learning agent must satisfy
\begin{equation}
\label{eq:multi_step_condition}
\forall k > 0, \qquad \sum_{t = 1}^{kT} \mu_{a_{t}}^{t} \geq (1 - \alpha)\sum_{t = 1}^{kT} \mu_{b_{t}}^{t},
\end{equation}
which reduces to~\eqref{eq:clucb_constr} for $T=1$. Knowing that the conservative condition is checked every $T$ steps provides the agent with a leeway that can be used to perform more exploration and possibly converge faster towards the optimal policy. 

We first derive a conservative condition that can be evaluated at any time $t \in[ kT + 1, (k+1)T]$ of a phase $k \in \mathbb{N}$ in order to determine whether action $a_t$ is safe.
We build this condition such that when selecting an action $a_{t}$, we want to ensure that by playing the baseline arm until the next checkpoint (\ie until $(k+1)T$) , the algorithm would meet the condition~\eqref{eq:multi_step_condition}.
Formally, at any step $t$, we replace~\eqref{eq:multi_step_condition} with 
\begin{equation}
\label{eq:ideal_algorithm_check}
\begin{aligned}
        \sum_{i\in S_{t-1}} \mu_{a_{i}}^{i} & + \sum_{i\in S_{t-1}^{b}}\mu_{b_{i}}^{i} + \mu_{a_{t}}^{t}  \\
                                            & + \alpha((k+1)T - t)\;\mu_{l} \geq (1-\alpha)\sum_{i = 1}^{t} \mu_{b_{i}}^{i},
\end{aligned}
\end{equation}
where $\mu_l$ is as in Asm.~\ref{asm:baseline_performance} {\color{black} i.e a lower bound on the average reward of the baseline strategy}.

We now modify \cclinucb to satisfy this constraint. Changing the conservative condition impacts how the algorithm evaluates whether an action is ``safe" or not (i.e., if selecting a specific action is compatible with the conservative condition), however the algorithm can still use the bound in~\eqref{eq:martingale} to lower bound the sum of the values of actions selected so far, and compute the conservative set at time $t$ as 
\begin{align}
\label{eq:checkpoint_safety_set}
        \mathcal{C}_{t} := \Bigg\{a &\in \mathcal{A}_{t}\setminus\{b_{t}\} \mid \max\bigg\{\sum_{i\in S_{t-1}} r_{a_{i}}^{i} - \psi_{L}(t),0\bigg\}  \nonumber \\ 
                                    &+\sum_{ i \in S_{t-1}^{C}} \mu_{b_{i}}^{i}+ \max\Big\{\min_{\theta\in\mathcal{C}_{t}} \left\langle \theta, \phi_{a}^{t} \right\rangle, 0 \Big\}  \\ 
                                    &\quad{}+  \alpha\;\big((k+1)T- t\big)\;\mu_{l}\geq (1-\alpha)\sum_{i = 1}^{t} \mu_{b_{i}}^{i}\Bigg\}, \nonumber
\end{align}
and the arm to pull is obtained by solving the constrained problem~\eqref{eq:safety_set_linear}.
We now proceed by analyzing how this different conservative condition impacts the final regret of \cclinucb, which we rename \cclinucbc to stress the checkpoint-based conservative condition.

\begin{theorem}\label{prop:batch_regret}
{\color{black}For any $\delta>0$, with probability as least $1-\delta$} \cclinucbc satisfies condition~\eqref{eq:multi_step_condition} at every checkpoint. Furthermore, let $\tilde{T}_{\delta}^{\alpha} :=  \frac{\alpha \mu_{l}}{(1-\alpha) \mu_{h} + \alpha \mu_{l}}T$, then \cclinucbc suffers a regret
\begin{itemize}
\item If $\tilde{T}_{\delta}^{\alpha} \geq C_{b}(\alpha, \mu_{l}, \delta)$

\begin{small}
\begin{align*}
&R(n) \leq O\bigg( \sigma d \log\left(\frac{nD^{2}}{\lambda d}\right) \sqrt{n} + \\
&\frac{\Delta_{h}}{\alpha \mu_{l}} \max\Bigg\{d\big( \sqrt{\lambda} B + \sigma\big) \sqrt{\tilde{T}_{\delta}^{\alpha} \log\bigg( \frac{\tilde{T}_{\delta}^{\alpha}  D^2}{\lambda \delta}\bigg)}  + \mu_{h}  - \frac{\alpha \mu_{l}}{2}\tilde{T}_{\delta}^{\alpha} \nonumber ,0\Bigg\}, \nonumber
\end{align*}
\end{small}

\item otherwise
\begin{align*}
&R(n) \leq O\bigg(\sigma d \log\left(\frac{nD^{2}}{\lambda d}\right) \sqrt{n}\nonumber\\
&+\frac{\Delta_{h}d^{2}}{(\alpha \mu_{l})^{2}}\big( \sqrt{\lambda}B + \sigma\big)^{2}\log\left( \frac{d^{2}\sqrt{D_{0}}}{\sqrt{\delta}\alpha \mu_{l}}\big( \sqrt{\lambda}B + \sigma\big)^{2}\right)^{2} \bigg),
\end{align*}
\end{itemize}
where
\begin{align*}
C_{b}(\alpha, \mu_{l}, \delta) :=28d^{2}\left(\frac{2/3 + \sqrt{\lambda}B + 2\sigma}{\alpha \mu_{l}}\right)^{2}\times\nonumber&\\
\ln\left( \frac{696d^{2}D_{0}}{\delta\left(\alpha \mu_{b}\right)^{2}}\left(2/3 + \sqrt{\lambda}B + 2\sigma\right)^{2}\right)^{2},& 
\end{align*}
with $D_{0} := \max\{3, 2D^{2}/\lambda\}$.
\end{theorem}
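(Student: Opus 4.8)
I would run the whole argument on the event $\mathcal{G}$ on which \emph{both} (i) $\theta^\star\in\Theta_t$ for all $t$ (guaranteed by $\beta_t$ as in~\eqref{eq:beta_linucb}) and (ii) the martingale bound~\eqref{eq:martingale} holds for all $t$; combining their failure probabilities by a union bound (the $\log$-terms inside $\psi_L$ and $\beta_t$ already absorb the union over the random cardinality $|S_{t-1}|$) gives $\Pr(\mathcal{G})\ge 1-\delta$. On $\mathcal{G}$ every surrogate used by \cclinucbc is a valid under-estimate: $\max\{\sum_{i\in S_{t-1}}r_{a_i}^i-\psi_L(t),0\}\le\sum_{i\in S_{t-1}}\mu_{a_i}^i$ (rewards are nonnegative by Asm.~\ref{asm:environment}) and $\max\{\min_{\theta\in\Theta_t}\langle\theta,\phi_a^t\rangle,0\}\le\mu_a^t$ (since $\theta^\star\in\Theta_t$). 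For \emph{safety} I would show, by induction on $t$, that on $\mathcal{G}$ the inequality~\eqref{eq:ideal_algorithm_check} holds at every $t$; evaluated at a checkpoint $t=(k+1)T$ its slack term vanishes and it becomes exactly~\eqref{eq:multi_step_condition}. If $a_t\ne b_t$ then $a_t\in\mathcal{C}_t$ and~\eqref{eq:ideal_algorithm_check} follows from~\eqref{eq:checkpoint_safety_set} and the two under-estimates above; if $a_t=b_t$ it follows from the induction hypothesis at $t-1$ (the agent's cumulative mean reward through $t-1$ is the same object in both statements) together with $\mu_{b_t}^t\ge\mu_l$ from Asm.~\ref{asm:baseline_performance}; and the base case $t=kT+1$ uses the previous checkpoint's guarantee and then reduces to the trivial $\alpha\mu_{b_t}^t+\alpha(T-1)\mu_l\ge0$.

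For the \emph{regret} I would split, using the events of the excerpt, $R(n)=\sum_{t}\one{E_{1,t}}\Delta_{a_t}^t+\sum_{t}\one{E_{2,t}}\Delta_{a_t}^t+\sum_{t}\one{E_{3,t}}\Delta_{b_t}^t$. On $\mathcal{G}$: under $E_{1,t}$, $a_t$ is the unconstrained \ucb arm so the standard optimism argument gives $\Delta_{a_t}^t\le 2\beta_t\|\phi_{a_t}^t\|_{V_t^{-1}}$; under $E_{2,t}$, $a_t$ is optimistic \wrt the baseline, hence $\Delta_{a_t}^t\le\Delta_h+2\beta_t\|\phi_{a_t}^t\|_{V_t^{-1}}$; under $E_{3,t}$, $\Delta_{b_t}^t\le\Delta_h$. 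Summing the $\beta_t\|\phi_{a_t}^t\|_{V_t^{-1}}$ over $t\in S_n$ by Cauchy--Schwarz and the elliptical potential lemma (and $\beta_t\le\beta_n$) yields the $O\!\big(\sigma d\log(nD^2/(\lambda d))\sqrt n\big)$ term, so the whole problem reduces to bounding $N:=\sum_t\one{E_{2,t}}+\sum_t\one{E_{3,t}}$, since $R(n)\le O\!\big(\sigma d\log(nD^2/(\lambda d))\sqrt n\big)+N\Delta_h$.

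To bound $N$, let $\tau$ be the last non-$E_1$ step, lying in phase $k$, and put $s:=\tau-kT$. Since $\tau$ is not $E_1$, the \ucb arm fails~\eqref{eq:checkpoint_safety_set} at $\tau$; unfolding this failure on $\mathcal{G}$ via $\min_{\theta\in\Theta_\tau}\langle\theta,\phi_{a^{\ucb}_\tau}^\tau\rangle\ge\mu^\star_\tau-2\beta_\tau\|\phi_{a^{\ucb}_\tau}^\tau\|_{V_\tau^{-1}}\ge\mu_{b_\tau}^\tau-2\beta_\tau D/\sqrt\lambda$ (the ridge estimate over-values the \ucb arm), the optimism-\wrt-baseline of every non-baseline step ($\sum_{i\in S_{\tau-1}}\mu_{a_i}^i\ge\sum_{i\in S_{\tau-1}}\mu_{b_i}^i-2\sum_{i\in S_{\tau-1}}\beta_i\|\phi_{a_i}^i\|_{V_i^{-1}}$), and the checkpoint guarantee $\sum_{i=1}^{kT}\mu_{a_i}^i\ge(1-\alpha)\sum_{i=1}^{kT}\mu_{b_i}^i$ to cancel the pre-phase-$k$ contributions, the sums telescope; after using $\sum_{i=kT+1}^\tau\mu_{b_i}^i\ge s\mu_l$, $(k+1)T-\tau=T-s$, and $\sum_{i\in S_{\tau-1}}\beta_i\|\phi_{a_i}^i\|_{V_i^{-1}}\le\beta_\tau\sqrt{2d\,|S_{\tau-1}|\log(1+|S_{\tau-1}|D^2/(\lambda d))}$, this reads, up to lower-order terms, $\alpha\mu_l T\lesssim d(\sqrt\lambda B+\sigma)\sqrt{T\log(\cdot)}$, whose solution puts $T$ (equivalently $\tilde{T}_{\delta}^{\alpha}$) below the threshold $C_{b}(\alpha,\mu_l,\delta)$. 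Hence in the \emph{second case} $\tilde{T}_{\delta}^{\alpha}<C_{b}(\alpha,\mu_l,\delta)$ one observes instead that~\eqref{eq:checkpoint_safety_set} only \emph{relaxes} the $T=1$ condition~\eqref{eq:safety_set_definition} by the nonnegative term $\alpha((k+1)T-t)\mu_l$, so any bound on $N$ for \cclinucb applies, and Thm.~\ref{thm:regret}'s analysis gives $N=\wt O\!\big(d^2(\sqrt\lambda B+\sigma)^2/(\alpha\mu_l)^2\big)$, yielding the second displayed bound. In the \emph{first case} $\tilde{T}_{\delta}^{\alpha}\ge C_{b}(\alpha,\mu_l,\delta)$, the same computation run contrapositively shows the non-$E_1$ steps are confined to an initial stretch: for $t\le\tilde{T}_{\delta}^{\alpha}$ the slack plus the checkpoint guarantee already make every arm safe (so the step is $E_1$), and a sharper version that retains $\sum_{i=kT+1}^\tau\mu_{b_i}^i$ rather than lower-bounding it gives $N\le\tilde{T}_{\delta}^{\alpha}+O\!\big(\tfrac{d(\sqrt\lambda B+\sigma)}{\alpha\mu_l}\sqrt{\tilde{T}_{\delta}^{\alpha}\log(\tilde{T}_{\delta}^{\alpha}D^2/(\lambda\delta))}\big)$; writing $N\Delta_h=\tfrac{\Delta_h}{\alpha\mu_l}(\alpha\mu_l N)$ and carrying the margin $-\tfrac{\alpha\mu_l}{2}\tilde{T}_{\delta}^{\alpha}$ and the single-step $\mu_h$ from the case-one contradiction produces the first displayed bound, the $\max\{\cdot,0\}$ recording that for $\tilde{T}_{\delta}^{\alpha}$ well above $C_{b}(\alpha,\mu_l,\delta)$ there are essentially no conservative plays.

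The hard part is the bookkeeping in the first case: one must simultaneously exploit the per-phase leeway $\alpha((k+1)T-t)\mu_l$ and the checkpoint guarantees to cancel the large cumulative terms, keep the control of the agent's accumulated ``surplus'' $\sum_{i\in S_{t-1}}\beta_i\|\phi_{a_i}^i\|_{V_i^{-1}}$ at the $\sqrt{\,\cdot\,}$ rate via the elliptical potential lemma, and track the $\delta$- and $\log$-dependence so the transition comes out precisely as $C_{b}(\alpha,\mu_l,\delta)$ with the stated constants; it is the martingale bound~\eqref{eq:martingale} that keeps the coefficient of this $\sqrt{\,\cdot\,}$ term linear rather than quadratic in $d$, exactly as in the discussion following Thm.~\ref{thm:regret}.
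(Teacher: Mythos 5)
Your proposal follows essentially the same route as the paper: the identical regret decomposition into exploratory/optimistic/baseline steps, the last-violation-time $\tau$ argument to bound the number of conservative plays, and the key observation that the per-phase slack $\alpha((k+1)T-t)\mu_l$ makes the first $\tilde T^{\alpha}_{\delta}$ steps automatically safe, which lets you evaluate the concave (eventually decreasing) bounding function at $\tilde T^{\alpha}_{\delta}$ when $\tilde T^{\alpha}_{\delta}\ge C_b$ and fall back to the $T=1$ analysis of Thm.~\ref{thm:regret} otherwise. One presentational slip worth fixing: the intermediate claim $N\le \tilde T^{\alpha}_{\delta}+O(\cdot)$ and the phrase ``non-$E_1$ steps are confined to an initial stretch'' are backwards --- the initial stretch consists of \emph{safe} (non-conservative) steps, so it lower-bounds $|S^{\star}_{\tau-1}|+|S^{+}_{\tau-1}|$ and is what produces the negative margin $-\tfrac{\alpha\mu_l}{2}\tilde T^{\alpha}_{\delta}$ that you do correctly carry into the final displayed bound.
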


This bound illustrate how the length $T$ of each phase may significantly simplify the problem. As $T$ gets larger, satisfying condition \eqref{eq:multi_step_condition} becomes easier, the baseline is selected less often and more time is spent in exploring different actions, thus leading to smaller regret. Interestingly, when $T$ is large enough (i.e., $T \geq C_{b}(\alpha, \mu_{l}, \delta)$), the conservative contribution has a smaller and smaller impact onto the regret, to the point that the \textit{max} in the second term can become $0$, thus reducing the regret to the standard regret bound of \linucb, with no impact from the conservative constraint.

\textbf{Alternative checkpoint schemes.} While we assumed $T$ to be fixed, it is possible to generalize this result along different lines. If the time between any two checkpoints is known to be lower-bounded by $T_{\min}$, the same analysis could hold by replacing $T$ with $T_{\min}$. Similarly, if $T$ is random from a known distribution, then it is possible to compute the $1 - \delta/2$ quantile of $T$ to recover high-probability guarantees on the conservative properties of the algorithm. Finally, if the checkpoints are completely arbitrary (or even adversarially chosen) then in order to guarantee that~\eqref{eq:multi_step_condition} is verified at \textit{all} checkpoints, the agent needs to be conservative at every step, thus reducing to condition~\eqref{eq:clucb_constr}.

\section{Experiments}\label{sec:experiments}
In this section we provide empirical evidence of the advantage of the Martingale lower-bound and the action selection process in synthetic and real-data problems. 

\subsection{Synthetic Environments}
We consider a MAB with $K=10$ Bernoulli arms whose means are drawn from a uniform distribution, $\mu_i \sim \text{Uniform}([0.25, 0.75])$. 
The conservative level $\alpha$ is set to $0.05$, the horizon $n$ to $10^6$ ($T=1$) and $\delta=0.01$.
We generated $70$ different Bernoulli bandit problems (\ie values of $\mu_i$) and we performed $40$ simulations for each. In each problem, we selected the $4$\textsuperscript{th} best arm as baseline. Out of the $70$ problems, we report the regret curves for the instance where the advantage of \ccucb w.r.t.\ \cucb in terms of the average regret at $n$ is the smallest. This provides an estimated worst-case scenario for our comparison (see Appendix for further details and results). We report the performance of \ucb and an oracle variant of \cucb (\cucb-Or) where the conservative condition~\eqref{eq:clucb_constr} is checked exactly. Furthermore, we test \cucb and a variant of \cucb (\cucb-L) using the action selection process suggested in~\cite{WuSLS16}, which returns the safe arm with the largest lower bound . Finally, we report an ablation study for \ccucb, where we consider the Martingale lower bound (\cucb-M) and the constrained action selection process~\eqref{eq:safety_set_linear} (\cucb-S) separately beside the full algorithm (\ccucb). Fig.~\ref{fig:synthetic}(\emph{top}) shows that the MDS bound alone provides a significant improvement, where the regret is reduced by $43\%$ w.r.t.\ \cucb's. Interestingly, the action selection process (\cucb-S) is much more effective than \cucb-L and it reduces the regret of \cucb by $12\%$. Finally, the combination of the two elements (\ccucb) leads to a reduction of the original regret of more than $51\%$, with a performance which gets much closer to \cucb-Or.


\begin{figure}[t]
    \centering
    \begin{subfigure}[b]{0.49\textwidth}
        \includegraphics[width=0.95\textwidth]{./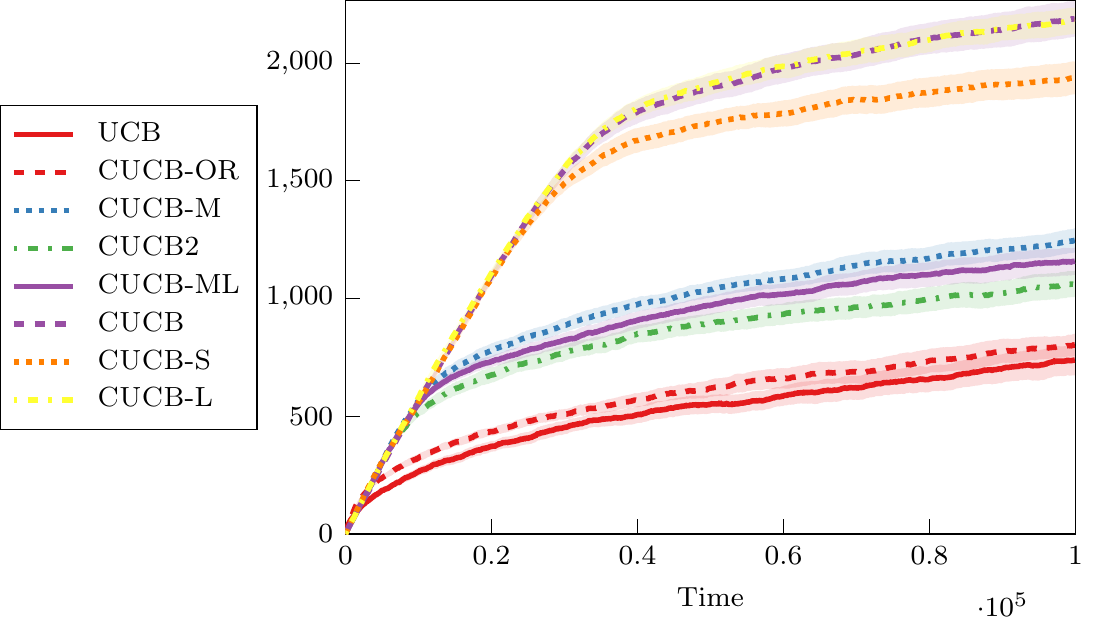}
    \end{subfigure}\\[4pt]
    \begin{subfigure}[b]{0.49\textwidth}
        \includegraphics[width=.95\textwidth]{./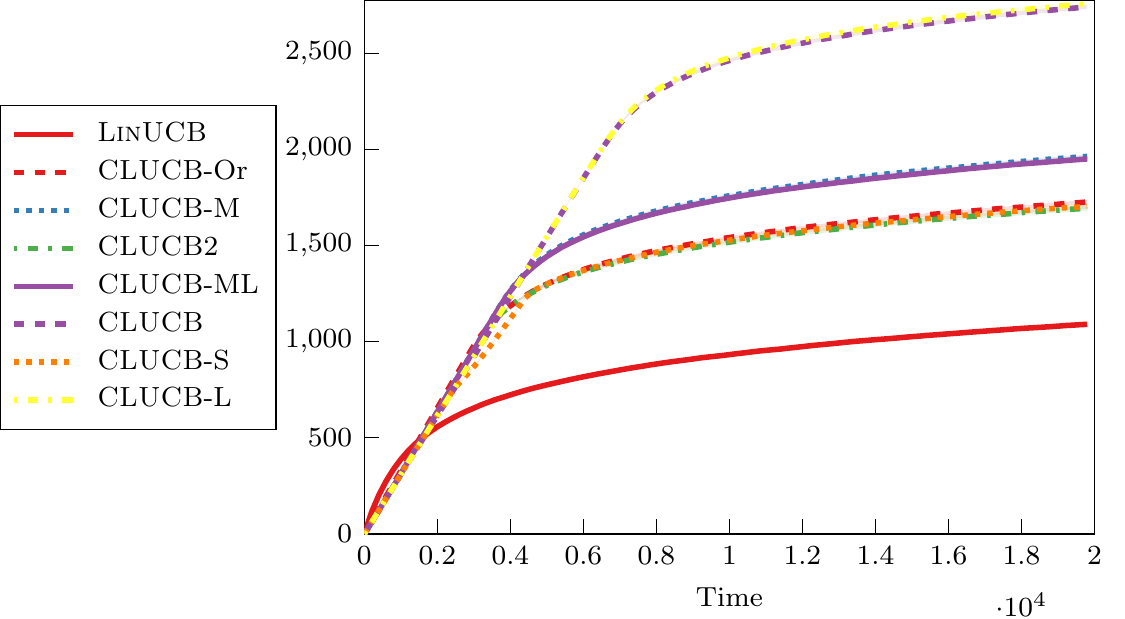} 
    \end{subfigure}
    \caption{Cumulative regret in synthetic models. \emph{Top:} Bernoulli arms. \emph{Bottom:} linear bandits.
    \tikz[baseline]{\node (fnode) {};}}
    \label{fig:synthetic}
\tikz[overlay]{
        \coordinate (c1) at ($(fnode) + (-5.8cm, 6.4cm)$); 
        \node[anchor=north west, font=\tiny] at (c1) {Or: oracle};
        \node[anchor=north west, font=\tiny] at ($(c1)+(0,-6pt)$) {M: MDS bound};
        \node[anchor=north west, font=\tiny] at ($(c1)+(0,-12pt)$) {L: arm with max lower bound};
        \node[anchor=north west, font=\tiny] at ($(c1)+(0,-18pt)$) {S: constrained arm selection};
}
\end{figure}

We also evaluated \cclinucb in the linear setting. 
We considered a non-contextual case with $30$ actions, each defined by a  $100$-dimensional feature vector.
The features and $\theta^\star$ are drawn randomly in the unit ball such that the mean reward of each arm is in $[0,1]$.
The reward noise is drawn from $\mathcal{N}(0, 0.1^2)$ and the baseline arm is the $6$\textsuperscript{th} best action.
We set $\lambda=0.5$, $\delta=0.01$ and $\alpha=0.05$. We generated $70$ models and for each model we averaged the results over $40$ runs.
As in the MAB case, we report the results for the model with the smallest advantage for \cclinucb w.r.t.\ \clinucb (Fig.~\ref{fig:synthetic}(\emph{bottom})). 
Contrary to the MAB setting, the main improvement is obtained by \clinucb-S, whose performance matches \cclinucb and even the oracle variant, corresponding to an improvement of $38\%$ compared to \clinucb. As reported in the appendix in the average case (over models), we observe similar behaviors and performance improvements as in the MAB setting.
{\color{black}
Finally, to provide an idea of how many times event $E_2$ occurs, we performed tests in synthetic linear setting (see experiment section) with baseline being the third-best arm.
On average over multiple models, the percentage of time event $E_2$ happened in the first $5000$ steps ($25\%$ of overall time) is $38.7\%$ ($\pm 9\%$). This shows that potentially we have played something better than baseline and for sure we have gained information (in contrast to playing the baseline).}

\textbf{Checkpoint-based Condition.}
We compare the effect of the checkpoint $T$ on the regret of the algorithms.
We report the results for Bernoulli arms, the linear experiments can be found in Appendix.
In this case, the horizon is set to $n=20000$, all the other parameters are unchanged.
We generated $15$ (integer) checkpoint values logarithmic space between $1$ and $n$.
Fig.~\ref{fig:mab_checkpoint} shows the difference in the regret between \ccucbc and \ucb as a function of $T$.
As expected, the difference decreases as $T$ increases since the condition becomes less strong.
Note that even for $T=n$, \ccucbc and \ucb are different since \ccucbc might discard \ucb optimal arms in order to be safe.
In order to have the same behavior, $T$ should be put sufficiently large in order to overcome the pessimistic estimate used in the condition.
Finally, the improvement provided by the new condition is proportional to the quality of the baseline.
The stronger the baseline, the less is the margin for performing better exploration than playing the baseline itself.

\begin{figure}[t]
\centering
\includegraphics[width=0.8\columnwidth]{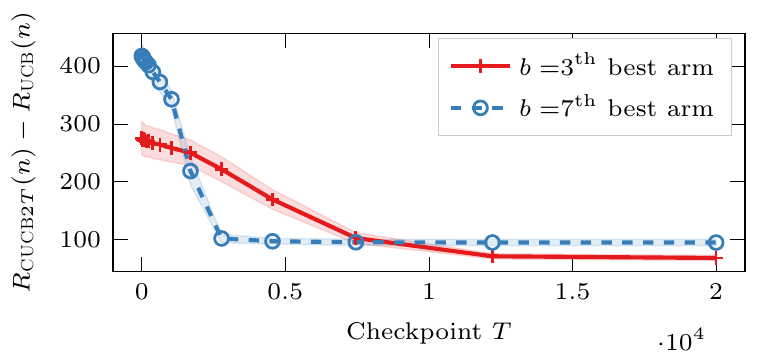}  
\caption{Relative performance between \ccucbc and \ucb in synthetic MAB setting.}
\label{fig:mab_checkpoint}
\end{figure}

\subsection{Dataset-based Environments}
Fig.~\ref{fig:jester} reports the results using the Jester Dataset~\cite{goldberg2001eigentaste} that consists of joke ratings in a continuous scale from $-10$ to $10$ for $100$ jokes from a total of $73421$ users.
We consider the cold start problem: a new user arrives and we need to learn her preferences (\ie $\theta^\star$).
We use the features extracted via a low-rank matrix factorization ($d=35$) to represents the actions (\ie the jokes).
We consider a complete subset of $40$ jokes and $19181$ users rating all the $40$ jokes.
The preference of the new user is randomly selected from the $19181$ users and mean rewards are normalized in $[0,1]$.
The reward noise is $\mathcal{N}(0, 0.1^2)$, the horizon is $T=10^5$, $\alpha=0.01$, $\delta=0.01$ and $\lambda=0.5$ (see appendix).
We report the results averaged over $100$ randomly selected users and for each user we performed $5$ runs.
The baseline is the $10$\textsuperscript{th} best arm.
We also report the regret of \cclinucbc with a checkpoint horizon $T$ equal to $5\%, 10\%$ or $12\%$ of the horizon $n$.
This experiment confirms that \cclinucb performs best, with a regret that is less than half of \clinucb. Furthermore, the results confirm that as the checkpoints become sparser, the performance of \cclinucb approaches the one of \linucb.

\begin{figure}[t]
        \centering
        \includegraphics[width=\columnwidth]{./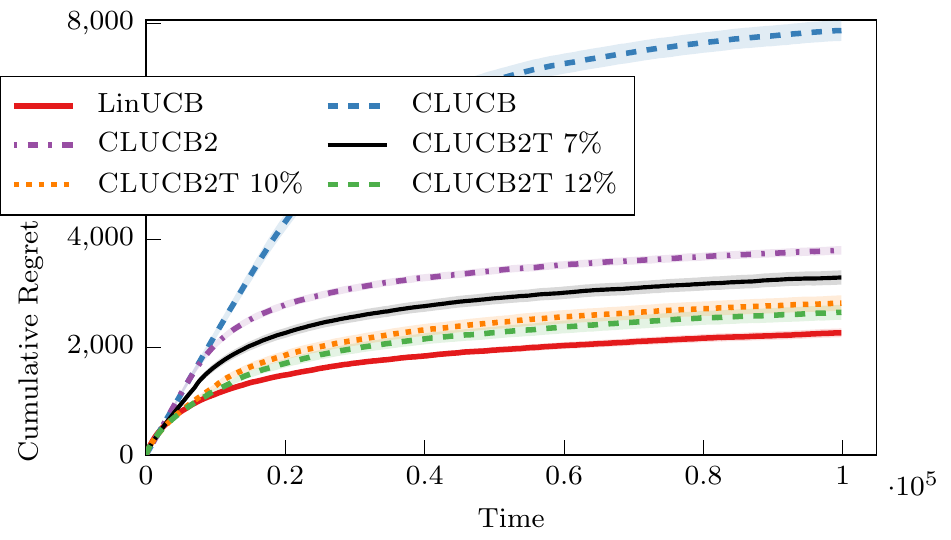}
        \caption{
                Average regret over multiple users of the Jester dataset
        }
        \label{fig:jester}
\end{figure}


\section{Conclusion}

We introduced \cclinucb, a novel conservative exploration algorithm for linear bandit that matches existing regret bound and outperforms state-of-the-art algorithms in a number of empirical tests. In this paper, we also proposed a first direction to relax the conservative condition towards a more realistic scenario. Important directions for future work are: identify alternative conservative exploration constraints that are directly motivated by specific applications, extend the current algorithms beyond linear bandit towards the more challenging reinforcement learning setting.

\bibliography{conservative}
\bibliographystyle{aaai}
\newpage{}
\onecolumn
\section{Appendix A: Proofs}

\subsection{Proof of Thm.~\ref{thm:regret}}

We define three sets of time steps: $S^\star_{t}$ is the steps before $t$ ($t$ included) where the optimistic arm is in the constraint set of~\eqref{eq:safety_set_linear} and it is thus selected, $S_{t}^{+}$ is the steps where the optimistic arm is not in the constraint set and the algorithm did not select the baseline $b_{t}$, finally let $S_{t}^{b} := [t]\setminus \left( S_{t}^\star \cup S_{t}^{+}\right)$ be the remaining time steps at which the baseline was played. We consider the high-probability event in which $\theta^\star \in \Theta_t$ (i.e., the true linear parameter belongs to the confidence ellipsoid). For any action $a$ and any time $t$, we introduce the notation
\begin{align}\label{eq:notation}
\wh\mu_a^t &:= \left\langle \hat{\theta}_{t}, \phi_{a}^{t}\right\rangle \\
\wt\mu_a^t &:= \left\langle \hat{\theta}_{t}, \phi_{a}^{t}\right\rangle + \beta_{t}\big|\big| \phi_{a_{t}}^{t}\big|\big|_{V_{t}^{-1}}.
\end{align}
The regret can be decomposed as
\begin{align}\label{eq:cccucb_regret_decomposition}
R_{\text{\cclinucb}}(n) &= \sum_{t \in S_{n}^\star} \big(\mu_{a_{t}^{\star}}^{t} - \mu_{a_{t}}^{t}\big)  +\sum_{t \in S_{n}^{+}} \big(\mu_{a_{t}^{\star}}^{t} - \mu_{a_{t}}^{t}\big)  +\sum_{t \in S_{n}^{b}} \big(\mu_{a_{t}^{\star}}^{t} - \mu_{b_{t}}^{t}\big)\nonumber \\
&\stackrel{(a)}{\leq} \sum_{t \in S_{n}^\star} \mu_{a_{t}^{\star}}^{t} - \mu_{a_{t}}^{t}  +\sum_{t \in S_{n}^{+}} \mu_{a_{t}^{\star}}^{t} - \mu_{a_{t}}^{t}  + \Delta_{h}\big| S_{n}^{b}\big|\nonumber \\
 &\stackrel{(b)}{\leq} \sum_{t \in S_{n}^\star} \big(\wt\mu_{a_{t}}^{t} - \mu_{a_{t}}^{t}\big)  +\sum_{t \in S_{n}^{+}} \big(\mu_{a_{t}^{\star}}^{t} - \mu_{b_{t}}^{t}\big) + \sum_{t \in S_{n}^{+}} \big(\mu_{b_{t}}^{t} - \mu_{a_{t}}^{t}\big)  + \Delta_{h}\big| S_{n}^{b}\big|\nonumber \\
&\stackrel{(c)}{\leq} \sum_{t \in S_{n}^\star} 2\beta_{t}\big|\big| \phi_{a_{t}}^{t}\big|\big|_{V_{t}^{-1}} + \sum_{t \in S_{n}^{+}} \big(\mu_{b_{t}}^{t} - \mu_{a_{t}}^{t}\big)  + \Delta_{h}\left(\big| S_{n}^{b}\big| + \big| S_{n}^{+}\big|\right) \nonumber \\
&\stackrel{(d)}{\leq}  \sum_{t \in S_{n}^\star \cup S_{n}^{+}} 2\beta_{t}\big|\big| \phi_{a_{t}}^{t}\big|\big|_{V_{t}^{-1}}  + \Delta_{h}\left(\big| S_{n}^{b}\big| + \big| S_{n}^{+}\big|\right),
\end{align}
where $(a)$ is using the upper bound $\Delta_h$ to the per-step regret of the baseline, $(b)$ follows from the high-probability upper-confidence bounds using in \linucb steps, $(c)$ is using the definition of the confidence ellipsoid and the bound on the baseline regret, and $(d)$ follows by the definition of $S_{n}^{+}$. In fact, at any time $t\in S_{n}^{+}$, an arm different from the UCB arm and the baseline is selected following the action selection in~\eqref{eq:safety_set_linear}. Since the baseline arm belongs to the constraint set by definition, then we have
\begin{align}\label{eq:optimistic.arms}
\mu_{b_{t}}^{t} \leq \left\langle \hat{\theta}_{t}, \phi_{a_{t}}^{t}\right\rangle +\beta_{t}\big|\big| \phi_{a_{t}}^{t}\big|\big|_{V_{t}^{-1}} \leq \mu_{a_{t}}^{t} + 2\beta_{t}\big|\big| \phi_{a_{t}}^{t}\big|\big|_{V_{t}^{-1}}.
\end{align}
Recalling the definition of $\beta_t$ and using Theorem~$2$ in~\cite{abbasi2011improved}, we obtain
\begin{align*}
\sum_{t \in S_{n}^\star \cup S_{n}^{+}} 2\beta_{t}\big|\big| \phi_{a_{t}}^{t}\big|\big|_{V_{t}^{-1}} \leq 4\sqrt{nd\log\left( 1 + \frac{nD^{2}}{\lambda d}\right)}\left[\sqrt{\lambda} B + \sigma\sqrt{2\log\left(\frac{1}{\delta}\right) + d\log\left( 1 + \frac{nD^{2}}{\lambda d}\right)}\right]
\end{align*}
Finally, we need to bound the number of times $\big|S_{n}^{+}\big| + \big|S_{n}^{b}\big|$ the optimistic arm was not selected (i.e., it was not in the constraint set). 

\begin{lemma}\label{lem:wrong_times_regret_ccucb}
	For any $\delta>0$, we have with probability $1-\delta$ that :
	\begin{align}\label{eq:improvement_over_kaz}
	|S_{n}^{b}| + |S_{n}^{+}| \leq \frac{2\mu_{h}}{\alpha\mu_{l}} + \frac{4}{3\alpha\mu_{l}}L_{\alpha}
	 +\frac{4}{\sqrt{3}(\alpha\mu_{l})^{3/2}}\Big(4d(\sqrt{\lambda}B + 2\sigma) + \sqrt{2}\sigma\Big)L_{\alpha}
	+ \frac{2}{(\alpha\mu_{l})^{2}}\Big(4d(\sqrt{\lambda}B + 2\sigma) + \sqrt{2}\sigma\Big)^{2}L_{\alpha}&	
	\end{align}
	where $L_{\alpha}$ is a logarithmic term,
	\begin{align*}
	L_{\alpha} = &\log\Bigg[\frac{\sqrt{D_{0}}}{\sqrt{\alpha\mu_{l}\delta}}
	+ \frac{12\sqrt{D_{0}}}{\sqrt{\delta}}\left(\frac{4d(\sqrt{\lambda}B + 2\sigma) + \sqrt{2}\sigma}{\alpha\mu_{l}}\right)^{2}\log\left( \frac{36\sqrt{D_{0}}}{\sqrt{\delta}\alpha\mu_{l}}\Big(4d(\sqrt{\lambda}B + 2\sigma) + \sqrt{2}\sigma\Big)\right)^{2}\Bigg]\times\\			&\times\log\left( \frac{36}{\alpha\mu_{l}}\sqrt{\frac{D_{0}}{\delta}}\Big(4d(\sqrt{\lambda}B + 2\sigma) + \sqrt{2}\sigma\Big)\right)
	\end{align*}
\end{lemma}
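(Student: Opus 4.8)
The plan is to bound the number of ``non-optimistic'' steps $|S_n^b| + |S_n^+|$ by a self-referential argument: at the last such step $\tau$, the conservative condition must be tight (otherwise the UCB arm would have been safe and we'd be in $S^\star$), and tightness combined with the martingale concentration bound forces $|S_{\tau-1}|$ — and hence $|S^b_{\tau-1}| + |S^+_{\tau-1}|$ up to an additive correction — to be large enough that the accumulated ``conservative budget'' exceeds the deficit. First I would let $\tau$ be the last time in $S_n^b \cup S_n^+$ (if the set is empty the bound is trivial), so that $|S_n^b| + |S_n^+| = |S^b_{\tau-1}| + |S^+_{\tau-1}| + 1$. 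Since at $\tau$ the algorithm does not play the optimistic arm, the safety condition~\eqref{eq:safety_set_definition} fails for $a_\tau^{\ucb}$, which after using the martingale bound~\eqref{eq:martingale} and the fact that $\theta^\star \in \Theta_\tau$ yields an inequality of the form
\begin{align*}
\sum_{i \in S_{\tau-1}} \mu_{a_i}^i + \sum_{i \in S^b_{\tau-1}} \mu_{b_i}^i - \psi_L(\tau) - 2\beta_\tau \|\phi^\tau_{a^{\ucb}_\tau}\|_{V_\tau^{-1}} < (1-\alpha)\sum_{i=1}^\tau \mu_{b_i}^i,
\end{align*}
where I have replaced $r$-sums by $\mu$-sums using~\eqref{eq:martingale} a second time and bounded the one-arm optimism gap via~\eqref{eq:optimistic.arms}.

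Next I would lower-bound the LHS. On the $S^b$ steps the baseline contributes $\mu_{b_i}^i$ exactly; on the $S^\star \cup S^+$ steps I use $\mu_{a_i}^i \geq \mu_{b_i}^i - 2\beta_i\|\phi^i_{a_i}\|_{V_i^{-1}}$ (from~\eqref{eq:optimistic.arms}, valid since on those steps the played arm is optimistic w.r.t.\ the baseline). Thus $\sum_{i \in S_{\tau-1}} \mu_{a_i}^i + \sum_{i \in S^b_{\tau-1}}\mu_{b_i}^i \geq \sum_{i=1}^{\tau-1}\mu_{b_i}^i - 2\sum_{i \in S_{\tau-1}}\beta_i \|\phi^i_{a_i}\|_{V_i^{-1}}$. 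Plugging in and rearranging, the $\sum \mu_{b_i}^i$ terms nearly cancel, leaving
\begin{align*}
\alpha\sum_{i=1}^\tau \mu_{b_i}^i < \mu_{b_\tau}^\tau + \psi_L(\tau) + 2\sum_{i \in S_{\tau-1}}\beta_i\|\phi^i_{a_i}\|_{V_i^{-1}} + 2\beta_\tau\|\phi^\tau_{a^{\ucb}_\tau}\|_{V_\tau^{-1}}.
\end{align*}
Using Asm.~\ref{asm:baseline_performance}, the LHS is at least $\alpha \mu_l \tau \geq \alpha\mu_l(|S^b_{\tau-1}| + |S^+_{\tau-1}| + 1)$, while on the RHS I bound $\mu_{b_\tau}^\tau \leq \mu_h$, the martingale term $\psi_L(\tau) = \sigma\sqrt{2|S_{\tau-1}|L^\delta_\tau} + \frac23 L^\delta_\tau$, and the elliptical sum by Cauchy–Schwarz plus the self-normalized/elliptical-potential bound (Thm.~2 in~\cite{abbasi2011improved}), which gives $\sum_{i \in S_{\tau-1}}\beta_i\|\phi^i_{a_i}\|_{V_i^{-1}} \leq 2d(\sqrt\lambda B + 2\sigma)\sqrt{|S_{\tau-1}|\,\log(\cdots)}$ up to logarithmic factors, and similarly $\beta_\tau\|\phi^\tau_{a^{\ucb}_\tau}\|_{V_\tau^{-1}}$ is absorbed into the same order (a single term $\sqrt{D^2/\lambda}(\sqrt\lambda B+\sigma)$, smaller than the sum). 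Since $|S_{\tau-1}| \leq |S^b_{\tau-1}| + |S^+_{\tau-1}| + 1 =: N$, I obtain a scalar inequality of the shape $\alpha\mu_l N \lesssim \mu_h + \frac23 L + (4d(\sqrt\lambda B + 2\sigma) + \sqrt2\sigma)\sqrt{N L}$ where $L$ is logarithmic in $N$ (and in the other parameters).

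Finally I would solve this quadratic-in-$\sqrt N$ inequality: writing $x = \sqrt N$, $\alpha\mu_l x^2 - c x\sqrt L - (\mu_h + \frac23 L) \lesssim 0$ with $c = 4d(\sqrt\lambda B + 2\sigma) + \sqrt2\sigma$ gives $x \lesssim \frac{c\sqrt L}{\alpha\mu_l} + \sqrt{\frac{\mu_h + \frac23 L}{\alpha\mu_l}}$, hence $N \lesssim \frac{2\mu_h}{\alpha\mu_l} + \frac{4}{3\alpha\mu_l}L + \frac{2c^2}{(\alpha\mu_l)^2}L + (\text{cross term } \tfrac{c}{(\alpha\mu_l)^{3/2}}L)$, matching the four terms in~\eqref{eq:improvement_over_kaz}. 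The remaining chore is to close the logarithmic loop: $L$ depends on $N$ through $\log(|S_{\tau-1}|^2/\delta)$ and through the elliptical-potential $\log(1 + N D^2/(\lambda d))$, so I substitute the just-derived crude polynomial bound on $N$ back into $L$ to get the explicit $L_\alpha$ stated, using that $\log$ is slowly varying. The main obstacle is precisely this bookkeeping — tracking the constants ($2$, $4/3$, $4/\sqrt3$, etc.) and the nested logarithm $L_\alpha$ through the quadratic solve so that the final expression matches~\eqref{eq:improvement_over_kaz} verbatim; the conceptual content (last-step tightness $+$ one-sided optimism $+$ elliptical potential $+$ solving a quadratic) is standard and mirrors the CLUCB analysis of~\cite{KazerouniGAR17}, the only new ingredient being the replacement of the $\beta_t\|x\|_{V_t^{-1}}$ budget term of order $\sqrt{d|S|}$ by the martingale term $\psi_L$ of order $\sqrt{|S|}$, which is what produces the $\sqrt d$-improved coefficient on the leading term.
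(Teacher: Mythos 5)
Your overall skeleton matches the paper's: take $\tau$ to be the last time the optimistic arm fails the safety test, negate condition~\eqref{eq:safety_set_definition} at $\tau$, apply the martingale bound a second time to trade $r$-sums for $\mu$-sums, use the fact that every non-baseline arm is optimistic w.r.t.\ the baseline (Eq.~\eqref{eq:optimistic.arms}) together with the elliptical-potential bound, lower-bound the accumulated baseline reward by $\alpha\mu_l\tau$, and solve the resulting scalar inequality. However, there is one concrete error in your closing step: you assert $|S_{\tau-1}| \leq |S^b_{\tau-1}| + |S^+_{\tau-1}| + 1 =: N$ in order to turn the inequality into a quadratic in $\sqrt{N}$. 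This is false. By definition $S_{\tau-1} = S^\star_{\tau-1}\cup S^+_{\tau-1}$ is the set of \emph{non-baseline} plays, so $|S_{\tau-1}|$ contains $|S^\star_{\tau-1}|$, the number of times the UCB arm was safe and played --- typically the overwhelming majority of steps and in no way controlled by $N$. The right-hand side of your inequality therefore grows like $\sqrt{|S^\star_{\tau-1}|+|S^+_{\tau-1}|+1}$, a quantity unrelated to the left-hand side $\alpha\mu_l N$, and the quadratic solve does not go through as written.

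The paper closes this gap differently: it splits $\alpha\mu_l\tau = \alpha\mu_l(1+|S^\star_{\tau-1}|+|S^+_{\tau-1}|+|S^b_{\tau-1}|)$ into two halves, keeps $\frac{\alpha\mu_l}{2}(1+|S^+_{\tau-1}|+|S^b_{\tau-1}|)$ on the left, and moves $-\frac{\alpha\mu_l}{2}(1+|S^\star_{\tau-1}|+|S^+_{\tau-1}|)$ to the right, where it dominates the $\sqrt{x}\log(x)$ growth of the elliptical and martingale terms; the right-hand side is then a concave function of $x = 1+|S^\star_{\tau-1}|+|S^+_{\tau-1}|$ whose global maximum is bounded explicitly by Lemma~\ref{lem:bound_exp}, yielding $L_\alpha$ and the stated constants. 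Your argument can be repaired along a nearby route --- replace the false step by $|S_{\tau-1}|+1\le\tau$ and solve the quadratic for $\tau$ itself, then use $|S^b_n|+|S^+_n|\le\tau$ --- which gives a bound of the same order, but as written the proof does not close, and the bookkeeping producing the exact constants and the nested logarithm $L_\alpha$ genuinely requires the maximization lemma rather than a direct quadratic solve in $N$.
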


Including this result into the regret decomposition provides the final bound
\begin{align*}
R_{\cclinucb}(n) \leq &4\sqrt{nd\log\left( 1 + \frac{nD^{2}}{\lambda d}\right)}\times \left[\sqrt{\lambda} B + \sigma\sqrt{2\log\left(\frac{1}{\delta}\right) + d\log\left( 1 + \frac{nD^{2}}{\lambda d}\right)}\right]
+\frac{2\mu_{h}}{\alpha\mu_{l}} + \frac{4}{3\alpha\mu_{l}}L_{\alpha} \\
&+\frac{4}{\sqrt{3}(\alpha\mu_{l})^{3/2}}\Big(4d(\sqrt{\lambda}B + 2\sigma) + \sqrt{2}\sigma\Big)L_{\alpha} + \frac{2}{(\alpha\mu_{l})^{2}}\Big(4d(\sqrt{\lambda}B + 2\sigma) + \sqrt{2}\sigma\Big)^{2}L_{\alpha}
\end{align*}
where $D_{0} := \max\big\{ 2D^{2}/\lambda, 3\big\}$.

\begin{proof}[Proof of Lemma~\ref{lem:wrong_times_regret_ccucb}]
	Let $\tau : =\max\big\{ t\in [n] \mid t\in S_{n}^{+} \cup S_{n}^{b} \big\}$, i.e., the last time the optimistic arm was not in the constraint set (and either the baseline or another arm was selected). Let $\wt a_\tau$ the optimistic arm at time $\tau$, since it does not satisfy the constraint, we have
	\begin{align}
	\sum_{t \in S_{\tau-1}} r_{a_{t}}^t - \psi_{L}(\tau) + \max\big\{\min_{\theta \in \Theta_\tau} \langle \theta, \phi_{\wt a_\tau}^\tau \rangle, 0\big\} + \sum_{t \in S_{\tau-1}^{b}} \mu_{b_t}^t \leq (1 - \alpha)\sum_{t=1}^\tau \mu_{b_t}^t.
	\end{align}
	Reordering the baseline terms and recalling that $S_{t} = [t] \backslash S_t^b = S_t^\star \cup S_t^+$, we obtain
	\begin{align}
	\alpha\sum_{t=1}^\tau \mu_{b_t}^t \leq \mu_{b_\tau}^\tau + \sum_{t \in S_{\tau-1}} \mu_{b_t}^t - \sum_{t \in S_{\tau-1}} r_{a_{t}}^t + \psi_{L}(\tau) - \max\big\{\min_{\theta \in \Theta_\tau} \langle \theta, \phi_{\wt a_\tau}^\tau \rangle, 0\big\}.
	\end{align}
	Using $\mu_{b_\tau}^\tau \leq \mu_h$ and since the last term is non-negative, we can further simplify the expression as
	\begin{align}\label{eq:int.step}
	\alpha\sum_{t=1}^\tau \mu_{b_t}^t \leq \mu_h + \sum_{t \in S_{\tau-1}} \big(\mu_{b_t}^t - r_{a_{t}}^t\big) + \psi_{L}(\tau).
	\end{align}
	Using the same Friedman inequality as in the construction of the Martingale lower bound and the fact that whenever the algorithm does not select the baseline, the chosen arm is ``optimistic'' w.r.t.\ the baseline (see~\eqref{eq:optimistic.arms}) we have
	\begin{align}
	\sum_{t \in S_{\tau-1}} \big(\mu_{b_t}^t - r_{a_{i}}^i\big) &\leq \sum_{t \in S_{\tau-1}} \big(\mu_{b_t}^t - \mu_{a_{t}}^t\big) + \psi_L(\tau) \leq \sum_{t \in S_{\tau-1}} 2\beta_{t}\big|\big|  \phi_{a_{t}}^{t}\big|\big|_{V_{t}^{-1}} + \psi_L(\tau)\\
	&\leq 4d\left( \sqrt{\lambda} B + \sigma\right)\sqrt{\big| S_{\tau-1}\big|  + 1}  \log\left( \frac{2 D^{2}}{\lambda\delta}\left( \big| S_{\tau-1}\big| + 1\right)\right) + \psi_{L}(\tau),
	\end{align}
	where the last step follows from Lemma $4$ in~\cite{KazerouniGAR17}. As $\tau = 1 + |S_{\tau-1}^\star| + |S_{\tau-1}^{+}| + |S_{\tau-1}^{b}|$, we can lower-bound the LHS of~\eqref{eq:int.step} as
	\begin{align}\label{eq:lhs}
	\alpha\sum_{t=1}^\tau \mu_{b_t}^t \geq \alpha\mu_l (1 + |S_{\tau-1}^\star| + |S_{\tau-1}^{+}| + |S_{\tau-1}^{b}|) \geq \frac{\alpha\mu_l}{2} (1 + |S_{\tau-1}^{+}| + |S_{\tau-1}^{b}|) + \frac{\alpha\mu_l}{2} (1 + |S_{\tau-1}^\star| + |S_{\tau-1}^{+}|).
	\end{align}
	Plugging these results back into~\eqref{eq:int.step} and using the definition of $\psi_L(\tau)$ we obtain
	\begin{align*}
	\frac{\alpha\mu_{l}}{2}\Big(\big|S_{\tau-1}^{b}\big| + \big|S_{\tau-1}^{+}\big| + 1\Big) \leq &- \frac{\alpha\mu_{l}}{2} \Big( \big| S_{\tau-1}^\star\big|
	+  \big| S_{\tau-1}^{+}\big| + 1\Big) + \mu_{h} + \frac{2}{3}\log\Bigg( \Big(1+ \big| S_{\tau-1}^\star\big|+ \big|S_{\tau-1}^{+}\big| \Big) \frac{D_{0}}{\delta}\Bigg)\\
	&+ \left(4d\left( \sqrt{\lambda} B + \sigma\right) +\sqrt{2}\sigma\right)\sqrt{\big| S_{\tau-1}^\star\big| + \big|S_{\tau-1}^{+}\big| + 1}\log\Bigg( \Big(1+ \big| S_{\tau-1}^\star\big|+ \big|S_{\tau-1}^{+}\big| \Big) \frac{D_{0}}{\delta}\Bigg)
	\end{align*}
	where $D_{0} := \max\big\{ 2D^{2}/\lambda, 3\big\}$. To finish, bounding the number of rounds where the algorithm played we use the following lemma (Lemma~\ref{lem:bound_exp}):
	\begin{lemma}\label{lem:bound_exp}
	For any $x\geq 2$ and $a_{1},a_{2},a_{3},a_{4}>0$ such that $a_{2} \geq 2$, the function $f: \mathbb{R}^{+} \rightarrow \mathbb{R}$ with $f(x) = a_{1}\sqrt{x}\log(a_{2}x) + a_4 \log(a_2 x) - a_{3}x$, is bounded as follows:
	\begin{align*}
	\max_{x\geq 2} f(x) \leq\left( a_{1}\sqrt{\frac{2a_{4}}{a_{3}}} + \frac{8a_{1}^{2}}{3a_{3}}\log\left( \frac{18a_{1}\sqrt{a_{2}}}{a_{3}}\right)\right)\log\left( \frac{\sqrt{a_2}}{e}\sqrt{\frac{2a_{4}}{a_{3}} + \frac{64}{9}\left(\frac{a_{1}}{a_{3}}\right)^{2}\log\left( \frac{18a_{1}\sqrt{a_{2}}}{a_{3}}\right)^{2}}\right)&\\
	 + a_{4}\log\left[ a_{2} \left(\frac{2a_{4}}{a_{3}} + \frac{64}{9}\left( \frac{a_{1}}{a_{3}}\right)^{2}\log\left( \frac{18 a_{1}\sqrt{a_{2}}}{a_3}\right)^{2}\right)\right]&	 \end{align*}
	\end{lemma}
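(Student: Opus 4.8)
The plan is to treat $f$ as a one--dimensional concave function on $[2,\infty)$: locate its maximizer, substitute the first--order condition back into $f$ to obtain an almost closed form, upper--bound the location of the maximizer, and finally plug that bound into the closed form using monotonicity and subadditivity of the square root. First I would verify concavity on $[2,\infty)$: the second derivative of $x\mapsto a_1\sqrt x\log(a_2x)$ equals $-a_1\log(a_2x)/(4x^{3/2})\le 0$ because $a_2x\ge 4>1$, while $a_4\log(a_2x)$ and $-a_3x$ are concave as well. Hence the maximum over $[2,\infty)$ is attained either at $x=2$ or at the unique interior stationary point $x^\star$, characterized by $a_3x^\star=\tfrac12 a_1\sqrt{x^\star}\log(a_2x^\star)+a_1\sqrt{x^\star}+a_4$. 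The boundary case is disposed of directly: $f'(2)\le0$ forces $2a_3\ge\tfrac{1}{\sqrt2}a_1\log(2a_2)+\sqrt2\,a_1+a_4$, and substituting this into $f(2)$ produces $f(2)\le a_1\sqrt2\log(\tfrac{\sqrt{a_2}}{e}\sqrt2)+a_4\log(2a_2)$, i.e.\ the same expression as the interior bound with $x^\star$ replaced by $2$, which is then checked against the claimed bound.

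For the interior case I would substitute the stationarity identity into $f(x^\star)=a_1\sqrt{x^\star}\log(a_2x^\star)+a_4\log(a_2x^\star)-a_3x^\star$. The $a_3x^\star$ term partially cancels, and rearranging gives $f(x^\star)=\tfrac12 a_1\sqrt{x^\star}(\log(a_2x^\star)-2)+a_4(\log(a_2x^\star)-1)$; rewriting $\log(a_2x^\star)-2=2\log(\tfrac{\sqrt{a_2}}{e}\sqrt{x^\star})$ and discarding the $-1$ in the last summand yields $f(x^\star)\le a_1\sqrt{x^\star}\log(\tfrac{\sqrt{a_2}}{e}\sqrt{x^\star})+a_4\log(a_2x^\star)$. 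This step is the crux: using the negative term $-a_3x$ through the first--order condition halves the coefficient in front of the leading logarithm, which is precisely why the lemma is phrased with $\log(\tfrac{\sqrt{a_2}}{e}\sqrt{\cdot})$ and not $\log(a_2\cdot)$ --- a crude ``drop the negative term'' argument would lose this factor $2$.

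Next I would bound $x^\star$. Using $a_2x^\star\ge 4$, so $\log(a_2x^\star)\ge\log 4>1$, the middle term $a_1\sqrt{x^\star}$ in the stationarity identity is absorbed into the logarithmic term, reducing it to $a_3x^\star\le c_0\,a_1\sqrt{x^\star}\log(a_2x^\star)+a_4$ for an explicit constant $c_0$. A dichotomy on whether $a_4$ or the $\sqrt{x^\star}\log$ term carries at least half of $a_3x^\star$ gives either $x^\star\le 2a_4/a_3$ or an inequality of the form $\sqrt{x^\star}\lesssim\frac{a_1}{a_3}\log(a_2x^\star)$; in the latter case $x^\star$ is disentangled from the logarithm using the tangent--line bound $\log u\le u/c+\log c-1$ (applied to $u=\sqrt{a_2}\sqrt{x^\star}$ with $c$ proportional to $\sqrt{a_2}\,a_1/a_3$, chosen so the $\sqrt{x^\star}$ occurring on both sides can be moved to one side). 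Combining the two cases gives $x^\star\le\bar x:=\tfrac{2a_4}{a_3}+\tfrac{64}{9}(\tfrac{a_1}{a_3})^2\log^2(\tfrac{18a_1\sqrt{a_2}}{a_3})$, with the constants $\tfrac{64}{9}$ and $18$ being exactly what tracking $c_0$, the admissible slope $c$, and the various $\log 4$/$e$ factors produces.

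Finally, both $t\mapsto a_1\sqrt t\log(\tfrac{\sqrt{a_2}}{e}\sqrt t)$ (derivative $a_1\log(a_2t)/(4\sqrt t)\ge0$) and $t\mapsto a_4\log(a_2t)$ are non-decreasing on $[2,\infty)$, so $f(x^\star)\le a_1\sqrt{\bar x}\log(\tfrac{\sqrt{a_2}}{e}\sqrt{\bar x})+a_4\log(a_2\bar x)$, and $\sqrt{p+q}\le\sqrt p+\sqrt q$ turns $a_1\sqrt{\bar x}$ into $a_1\sqrt{2a_4/a_3}+\tfrac{8a_1^2}{3a_3}\log(\tfrac{18a_1\sqrt{a_2}}{a_3})$, i.e.\ exactly the first factor of the claimed bound, with $a_4\log(a_2\bar x)$ as the second term. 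I expect the $x^\star$--bounding step to be the main obstacle: concavity, the stationarity substitution and the closing monotonicity argument are routine, but getting the displayed constants requires choosing the tangent--line slope so that the $\sqrt{x^\star}$ terms genuinely cancel and then tracking every factor carefully; the boundary case $x^\star=2$ (and the degenerate regime $\bar x<2$, where monotonicity does not apply directly) needs a short separate check, but there the linear term $-a_3x$ dominates and the inequality is immediate.
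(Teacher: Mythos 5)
Your proposal is correct and follows essentially the same route as the paper's proof: concavity, the first-order condition $a_3x^\star=a_1\sqrt{x^\star}\log(\sqrt{a_2x^\star}e)+a_4$, substitution back into $f$ to halve the leading coefficient, an upper bound on $x^\star$ of the form $\tfrac{2a_4}{a_3}+O\big((\tfrac{a_1}{a_3})^2\log^2\big)$, and a final monotonicity-plus-$\sqrt{p+q}\le\sqrt p+\sqrt q$ step. The only (minor) differences are that you bound $x^\star$ with a self-contained dichotomy/tangent-line argument where the paper cites Lemma~8 of Kazerouni et al.\ (2017), and you explicitly treat the boundary case $x=2$, which the paper omits.
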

	\begin{proof}
	$f$ is a strictly concave function and goes to $-\infty$ as $x\rightarrow +\infty$ thus it admits a unique maximum, noted $x^{\star}$. Moreover, by putting the gradient of $f$ to zero, we have that:
	\begin{align}\label{eq:max_f}
	a_{3}x^{\star} = a_{1}\sqrt{x^{\star}}\log\left( \sqrt{a_{2}x^{\star}}e\right) + a_{4}
	\end{align}
	Thus injecting equation~\ref{eq:max_f} into the definition of $f$, we have that:
	\begin{align*}
	\max_{x\geq 2} f(x) = f(x^{\star}) \leq \frac{a_{1}}{2}\sqrt{x^{\star}}\log\left( a_{2}x^{\star}\right) + a_{4}\log\left( \frac{a_{2}x^{\star}}{e}\right)
	\end{align*} 
	Finally, using eq.~\ref{eq:max_f} and Lemma $8$ of \cite{KazerouniGAR17}, we get:
	\begin{equation*}
	x^{\star} \leq \frac{4a_{1}^{2}}{a_{3}^{2}}\log\left( \frac{4a_{1}\sqrt{a_{2}}e}{a_3}\right)^{2} + \frac{2a_{4}}{a_{3}}
	\end{equation*}
	Hence, putting everything together we have that:
	\begin{align*}
	\max_{x\geq 2} f(x) \leq \left( a_{1}\sqrt{\frac{a_{4}}{2a_{3}}} + \frac{a_{1}^{2}}{a_{3}}\log\left( \frac{18a_{1}\sqrt{a_{2}}}{a_{3}}\right)\right)\log\left( \sqrt{a_{2}}\sqrt{\frac{2a_{4}}{a_{3}} +4\left(\frac{a_{1}}{a_{3}}\right)^{2}\log\left( \frac{18a_{1}\sqrt{a_{2}}}{a_{3}}\right)^{2}}\right)&\\
	 + a_{4}\log\left[ a_{2} \left(\frac{2a_{4}}{a_{3}} +4\left( \frac{a_{1}}{a_{3}}\right)^{2}\log\left( \frac{18 a_{1}\sqrt{a_{2}}}{a_3}\right)^{2}\right)\right]&
	\end{align*}
	\end{proof}
	Using the previous lemma, we get
	\begin{align*}
	\frac{\alpha\mu_{l}}{2}\left( \big|S_{\tau-1}^{b}\big| + \big|S_{\tau-1}^{+}\big|\right) \leq \frac{2}{3}\log\left[ \frac{8D_{0}}{3\delta\alpha\mu_{l}} + \frac{4D_{0}}{\delta(\alpha\mu_{l})^{2}}\left(\Big(4d(\sqrt{\lambda}B + 2\sigma) + \sqrt{2}\sigma\Big)\log\left[ \frac{144d\sqrt{D_{0}}}{\sqrt{\delta}\alpha\mu_{l}}\Big(\sqrt{\lambda}B + 4\sigma\Big)\right]\right)^{2}\right]&\\
	+ \mu_{h} +\frac{2}{\sqrt{3\alpha\mu_{l}}}\Big(4d(\sqrt{\lambda}B + 2\sigma) + \sqrt{2}\sigma\Big)\log\Bigg[ \frac{\sqrt{D_{0}}}{\sqrt{\delta}e}\Bigg(\frac{2}{\sqrt{3\alpha\mu_{l}}} + 4\left(\frac{4d(\sqrt{\lambda}B + 2\sigma) + \sqrt{2}\sigma}{\alpha\mu_{l}}\right)^{2}\times&\\
	\times\log\left( \frac{36\sqrt{D_{0}}}{\sqrt{\delta}\alpha\mu_{l}}\Big(4d(\sqrt{\lambda}B + 2\sigma) + \sqrt{2}\sigma\Big)\right)^{2}\Bigg)\Bigg]&\\
	+ \frac{1}{\alpha\mu_{l}}\Big(4d(\sqrt{\lambda}B + 2\sigma) + \sqrt{2}\sigma\Big)^{2}\log\left( \frac{36}{\alpha\mu_{l}}\sqrt{\frac{D_{0}}{\delta}}\Big(4d(\sqrt{\lambda}B + 2\sigma) + \sqrt{2}\sigma\Big)\right)\log\Bigg[\frac{\sqrt{D_{0}}}{\sqrt{\alpha\mu_{l}\delta}}&\\
	+ \frac{12\sqrt{D_{0}}}{\sqrt{\delta}}\left(\frac{4d(\sqrt{\lambda}B + 2\sigma) + \sqrt{2}\sigma}{\alpha\mu_{l}}\right)^{2}\log\left( \frac{36\sqrt{D_{0}}}{\sqrt{\delta}\alpha\mu_{l}}\Big(4d(\sqrt{\lambda}B + 2\sigma) + \sqrt{2}\sigma\Big)\right)^{2}\Bigg]&
	\end{align*} 
	Since neither baseline nor a non-UCB arm will be pulled anymore after $\tau$, the final statement at $n$ follows.
\end{proof}

\subsection{Proof of Theorem~\ref{prop:batch_regret}}\label{app:proof_prop_1}

The proof follows the same regret decomposition as in~Thm.~\ref{thm:regret}
\begin{align}\label{eq:regret_decomposition_lc}
R(n) \leq  &\sum_{t \in S_{n}^\star \cup S_{n}^{+}} 2\beta_{t}\big|\big| \phi_{a_{t}}^{t}\big|\big|_{V_{t}^{-1}} + \Delta_{h}\left(|S_{n}^{b}| + |S_{n}^{+}| \right).
\end{align}
While the first term is exactly the regret of the \linucb algorithm, we need to derive a bound on the number of times a non-UCB arm is selected similar to Lemma~\ref{lem:wrong_times_regret_ccucb}.

\begin{lemma}\label{lem:number_conservative_lc}
Let $\tilde{T}_{\delta}^{\alpha} :=  \frac{\alpha \mu_{l}}{(1-\alpha) \mu_{h} + \alpha \mu_{l}}T$ and :
\begin{align*}
 C_{b}(\alpha, \mu_{l}, \delta) :=  28d^{2}\left(\frac{2/3 + \sqrt{\lambda}B + 2\sigma}{\alpha \mu_{l}}\right)^{2}\ln\left( \frac{696d^{2}D_{0}}{\delta\left(\alpha \mu_{b}\right)^{2}}\left(2/3 + \sqrt{\lambda}B + 2\sigma\right)^{2}\right)^{2} 
\end{align*}
where  $D_{0} := \max\{3, 2D^{2}/\lambda\}$. Then the  number of conservative plays for the algorithm \ref{alg:clucb2} is such that :
\begin{itemize}
\item if $T_{\delta}^{\alpha} \geq C_{b}(\alpha, \mu_{l}, \delta)$ :
\begin{align*}
\frac{\alpha\mu_{l}}{2}\left( \big|S_{\tau}^{b}\big| + \big| S_{\tau-1}^{+}\big|\right) \leq\max\Bigg\{ - \frac{\alpha\mu_{l}}{2} \Big( T_{\alpha}^{\delta} + 1\Big) + \mu_{h}+ 4d\left( \sqrt{\lambda} B + 2\sigma + \frac{2}{3}\right)\sqrt{ T_{\alpha}^{\delta} + 1}
\times\log\left( \frac{D_{0}}{\delta}\Big(T_{\alpha}^{\delta} + 1\Big) \right)  ,0\Bigg\}
\end{align*}
\item else :
\begin{align*}
\frac{\alpha\mu_{l}}{2}\left( \big|S_{\tau}^{b}\big| + \big|S_{\tau-1}^{+}\big|\right) \leq \frac{57d^{2}}{\alpha \mu_{l}}\Big( 2/3 + \sqrt{\lambda}B + 2\sigma\Big)^{2}\log\left( \frac{44d^{2}\sqrt{D_{0}}}{\sqrt{\delta}\alpha \mu_{l}}\left( 2/3 + \sqrt{\lambda}B + 2\sigma\right)^{2}\right)^{2}
\end{align*}
\end{itemize}
\end{lemma}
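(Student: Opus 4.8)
The plan is to follow the proof of Theorem~\ref{thm:regret} line by line. Using the regret decomposition \eqref{eq:regret_decomposition_lc}, proving Theorem~\ref{prop:batch_regret} reduces to (a) the safety guarantee and (b) a bound on the number of conservative plays $|S_n^b|+|S_n^+|$, which is exactly Lemma~\ref{lem:number_conservative_lc}; I would concentrate on (b), since (a) follows by the usual induction: on the high-probability event $\{\theta^\star\in\Theta_t\ \forall t\}$ together with the events under which the Freedman bounds \eqref{eq:martingale} hold, \eqref{eq:checkpoint_safety_set} is a valid surrogate for \eqref{eq:ideal_algorithm_check}, and \eqref{eq:ideal_algorithm_check} in turn implies \eqref{eq:multi_step_condition} at the next checkpoint once one also plays the baseline until then (because $\sum_{i=t+1}^{(k+1)T}\mu_{b_i}^i\ge((k+1)T-t)\mu_l$); since the baseline arm trivially satisfies this and the algorithm always plays either the baseline or an arm of $\mathcal{C}_t$, the conservative condition is maintained at every checkpoint.

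For Lemma~\ref{lem:number_conservative_lc}, I would reproduce the argument of Lemma~\ref{lem:wrong_times_regret_ccucb}. Let $\tau$ be the last step at which the UCB arm $\wt a_\tau$ is not selected; it lies in some phase $k$, i.e.\ $kT<\tau\le(k+1)T$. Since $\wt a_\tau\notin\mathcal{C}_\tau$, \eqref{eq:checkpoint_safety_set} fails for it; I would then (i) drop the two $\max\{\cdot,0\}$ terms, bounding the first by its argument and the second by $0$, (ii) use \eqref{eq:martingale} to pass from $\sum_{i\in S_{\tau-1}}r_{a_i}^i$ to $\sum_{i\in S_{\tau-1}}\mu_{a_i}^i-\psi_L(\tau)$, (iii) invoke \eqref{eq:optimistic.arms} (every played non-baseline arm is optimistic w.r.t.\ the baseline) together with Lemma~$4$ in~\cite{KazerouniGAR17} to get $\sum_{i\in S_{\tau-1}}(\mu_{b_i}^i-\mu_{a_i}^i)\le\sum_{i\in S_{\tau-1}}2\beta_i\|\phi_{a_i}^i\|_{V_i^{-1}}\le 4d(\sqrt{\lambda}B+\sigma)\sqrt{|S_{\tau-1}|+1}\log\big(\tfrac{2D^2}{\lambda\delta}(|S_{\tau-1}|+1)\big)$, and (iv) lower bound the baseline performance by $\sum_{i=1}^{\tau-1}\mu_{b_i}^i\ge(\tau-1)\mu_l$. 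The new element relative to Theorem~\ref{thm:regret} is the credit $\alpha((k+1)T-\tau)\mu_l$: it combines with the $\alpha\mu_l(\tau-1)$ coming from (iv) so that, after rearranging, the left-hand side is at least $\alpha\mu_l((k+1)T-1)$ rather than $\alpha\mu_l(\tau-1)$, yielding
\[
\alpha\mu_l\big((k+1)T-1\big)\ \le\ \mu_h+2\psi_L(\tau)+4d(\sqrt{\lambda}B+\sigma)\sqrt{|S_{\tau-1}|+1}\,\log\!\Big(\tfrac{2D^2}{\lambda\delta}(|S_{\tau-1}|+1)\Big).
\]
After merging $2\psi_L(\tau)$ and the $2\beta$ term into a single $4d(\sqrt{\lambda}B+2\sigma+\tfrac23)\sqrt{|S_{\tau-1}|+1}\log\big(\tfrac{D_0}{\delta}(|S_{\tau-1}|+1)\big)$ with $D_0=\max\{3,2D^2/\lambda\}$ (exactly as in the proof of Theorem~\ref{thm:regret}), I would split into two cases according to whether $\tilde T_\delta^\alpha\ge C_b(\alpha,\mu_l,\delta)$. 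Using $(k+1)T\ge T$ and the identity $\tau-1=|S_{\tau-1}^\star|+|S_{\tau-1}^+|+|S_{\tau-1}^b|$, the inequality above gives, in the first case, the checkpoint-dependent bound (which is non-increasing in $\tilde T_\delta^\alpha$ and equals $0$ once the spacing between checkpoints is large enough that the credit already suffices); in the second case the credit is too small to help, and I would instead isolate $|S_{\tau-1}^\star|+|S_{\tau-1}^+|+|S_{\tau-1}^b|$ and apply the maximization Lemma~\ref{lem:bound_exp} (with $a_3=\alpha\mu_l/2$), recovering the $d^2/(\alpha\mu_l)^2$ term. Since no conservative play occurs after $\tau$, the same bound holds at $n$; inserting it and the standard \linucb term (Theorem~$2$ in~\cite{abbasi2011improved}) into \eqref{eq:regret_decomposition_lc} yields the two regret bounds of Theorem~\ref{prop:batch_regret}.

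The hard part will be the argument in the last paragraph: turning $\alpha\mu_l((k+1)T-1)\le\mu_h+(\text{concentration in }|S_{\tau-1}|)$ into a bound on $|S_n^b|+|S_n^+|$ that is genuinely controlled by $\tilde T_\delta^\alpha$ rather than merely by $(k+1)T$. This requires a careful accounting of the position $\tau-kT$ of the last conservative play inside its phase, the amount $|S_{\tau-1}^\star|$ of exploration already performed, and the leftover credit, so as to show that far-apart checkpoints suppress the conservative plays down to zero (and otherwise one falls back on Theorem~\ref{thm:regret}); getting the threshold $C_b$ and the constants and logarithmic factors to line up is tedious but routine. Everything else — the safety induction, the martingale and optimism steps, and the $\sum 2\beta_i\|\phi_{a_i}^i\|_{V_i^{-1}}$ bound — is a direct transcription of the tools already developed for Theorem~\ref{thm:regret}.
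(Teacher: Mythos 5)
Your scaffolding matches the paper's for everything except the one step that carries the content of the lemma. The reduction to the last conservative time $\tau$, the martingale/optimism chain, the merge into $4d(\sqrt{\lambda}B+2\sigma+\tfrac23)\sqrt{|S_{\tau-1}|+1}\log(\tfrac{D_0}{\delta}(|S_{\tau-1}|+1))$, and the fallback to the Theorem~\ref{thm:regret}-style maximization (Lemma~\ref{lem:bound_exp}) in the second case are all exactly what the paper does. The gap is in the first case. Your mechanism is to keep the credit $\alpha((k+1)T-\tau)\mu_l$ on the left-hand side so as to obtain $\alpha\mu_l((k+1)T-1)\le \mu_h+(\text{concentration in }|S_{\tau-1}|)$. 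But to extract a bound on $|S_\tau^b|+|S_{\tau-1}^+|$ from this you must lower-bound $(k+1)T$ by $\tau$, at which point the credit contributes exactly $\alpha\mu_l((k+1)T-\tau)$, which is zero in the worst case $\tau=(k+1)T$ --- and you have no control over where in its phase the last conservative play falls. So your inequality degenerates to the $T=1$ one, and you correctly flag the remaining work as ``the hard part'' without supplying the idea that closes it.

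The paper's proof actually \emph{discards} the credit term at this point (it moves it to the right-hand side and bounds it by $0$); the checkpoint structure enters through an entirely different route. Because of the $\max\{\cdot,0\}$ clipping in~\eqref{eq:checkpoint_safety_set}, during the first phase the left-hand side of the safety test is at least $\alpha\mu_l(T-t)$, which for all $t\le \tilde T_\delta^\alpha=\frac{\alpha\mu_l}{(1-\alpha)\mu_h+\alpha\mu_l}T$ already exceeds $(1-\alpha)\sum_{l\le t}\mu_{b_l}^l\le(1-\alpha)\mu_h t$. Hence \emph{every} arm is safe during those steps, no conservative play can occur there, and therefore $|S_{\tau-1}^\star|+|S_{\tau-1}^+|\ge \tilde T_\delta^\alpha$. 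The bound $\frac{\alpha\mu_l}{2}(|S_\tau^b|+|S_{\tau-1}^+|)\le f(|S_{\tau-1}^\star|+|S_{\tau-1}^+|+1)$ with $f(x)=-\frac{\alpha\mu_l}{2}x+\mu_h+4d(\sqrt{\lambda}B+2\sigma+\tfrac23)\sqrt{x}\log(\tfrac{D_0 x}{\delta})$ is then combined with unimodality of $f$: its maximizer $x^\star$ is at most $C_b(\alpha,\mu_l,\delta)$, so when $\tilde T_\delta^\alpha\ge C_b$ the argument of $f$ lies in the decreasing region and $f(|S_{\tau-1}^\star|+|S_{\tau-1}^+|+1)\le f(\tilde T_\delta^\alpha+1)$, which is the first displayed bound (and is eventually negative, whence the outer $\max\{\cdot,0\}$). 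This ``free exploration forces the argument of $f$ past its maximum'' step is the missing idea; without it, or an equivalent accounting of $\tau-kT$ that you would still have to invent, the first case does not follow.
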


\begin{proof}

As previously, let's define $\tau$ as the last time the optimistic arm was not in the constraint set, and let $k$ be such that, $\tau \in \llbracket kT + 1, (k+1)T \rrbracket$, \ie the phase to which  $\tau$ belongs and let $\tilde{a}_{\tau}$ be the optimistic arm at time $\tau$. Because this arm does not satisfy the constraint we have  : 
\begin{align*}
\max\Bigg\{\sum_{t\in S_{\tau-1}} r^{l}_{a_{l}} - \psi_{L}(\tau), 0\Bigg\} + \sum_{l \in S_{\tau-1}^{b}} \mu_{b_{l}}^{l}
+\max\Big\{\min_{\theta\in\mathcal{C}_{\tau}} \left\langle \theta, \phi_{a_{\tau}}^{\tau}\right\rangle, 0 \Big\}+ \alpha((k+1)T - \tau)\mu_{l} \leq (1-\alpha)\sum_{l = 1}^{\tau} \mu_{b_{l}}^{l}& \nonumber.
\end{align*}
where $S_{\tau-1} =S_{\tau-1}^{\star}\cup S_{\tau-1}^{+}$. This can be rewritten as :
\begin{align}\label{eq:condition_not_verified_cl}
\alpha \;\tau\; \mu_{l}\leq \sum_{l \in S_{\tau-1}} \mu_{b_{l}}^{l} - r^{l}_{a_{l}}+\psi_{L}(\tau) + \mu_{h} -\alpha\left( (k+1)T - \tau\right)\mu_{l} -\max\Big\{\min_{\theta\in\mathcal{C}_{\tau}} \left\langle \theta, \phi_{a_{\tau}}^{\tau}\right\rangle, 0 \Big\}
\end{align}
Now, $- \alpha\left( (k+1)T - \tau\right)\Delta_{l}\leq 0$ and  $ -\max\Big\{\min_{\theta\in\mathcal{C}_{\tau}} \left\langle \theta, \phi_{a_{\tau}}^{\tau}\right\rangle, 0 \Big\}  \leq 0$. Thus using the same reasoning as in the regret analysis of \cclinucb, we have :
\begin{align}\label{eq:regret_checkpoint_maximization}
\frac{\alpha\mu_{l}}{2}\Big(\big|S_{\tau}^{b}\big| + \big|S_{\tau-1}^{+}\big|\Big) \leq&- \frac{\alpha\mu_{l}}{2} \Big( \big| S_{\tau-1}^{\star}\big| +  \big| S_{\tau-1}^{+}\big| + 1\Big) + \mu_{h}\nonumber\\
&+ 4d\left( \sqrt{\lambda} B + 2\sigma + \frac{2}{3}\right)\sqrt{\big| S_{\tau-1}^{\star}\big| + \big|S_{\tau-1}^{+}\big| + 1}\log\Bigg( \Big(1+ \big| S_{\tau-1}^{\star}\big|
+ \big|S_{\tau-1}^{+}\big| \Big) \frac{D_{0}}{\delta}\Bigg)
\end{align}
where $D_{0} = \max\{3, 2D^{2}/\lambda\}$. Let's define the function 
\begin{align*}
f :  x \mapsto - \frac{\alpha\mu_{l}}{2} x + \mu_{h}+ 4d\left( \sqrt{\lambda} B + 2\sigma + \frac{2}{3}\right)\sqrt{x}\log\Bigg( \frac{D_{0}x }{\delta}\Bigg)
\end{align*}Equation \eqref{eq:regret_checkpoint_maximization} can be rewritten as  : 
\begin{align*}
\frac{\alpha\mu_{l}}{2}\Big(\big|S_{\tau-1}^{b}\big| + \big|S_{\tau-1}^{+}\big|\Big)  \leq f\Big( \big| S_{\tau-1}^{\star}\big| +  \big| S_{\tau-1}^{+}\big| + 1 \Big) 
\end{align*}
but function $f$ has a maximum and computing it gives  :
\begin{align}\label{eq:result_non_batch}
\frac{\alpha\mu_{l}}{2}\left( \big|S_{\tau}^{b}\big| + \big|S_{\tau-1}^{+}\big|\right) \leq  \frac{57d^{2}}{\alpha \mu_{l}}\Big( 2/3 + \sqrt{\lambda}B 
+ 2\sigma\Big)^{2}\log\left( \frac{44d^{2}\sqrt{D_{0}}}{\sqrt{\delta}\alpha \mu_{l}}\left( 2/3 + \sqrt{\lambda}B + 2\sigma\right)^{2}\right)^{2}
\end{align}
and it is attained at $x^{\star}$ such that :
\begin{align*}
x^{\star}& \leq 28d^{2}\left(\frac{2/3 + \sqrt{\lambda}B + 2\sigma}{\alpha \mu_{l}}\right)^{2}\ln\left( \frac{696d^{2}D_{0}}{\delta\left(\alpha \mu_{b}\right)^{2}}\left(2/3 + \sqrt{\lambda}B + 2\sigma\right)^{2}\right)^{2} \\
&:= C_{b}(\alpha, \mu_{l}, \delta) 
\end{align*}

Function $f$ is increasing before $x^{\star}$ and decreasing afterwards. Now, equation \eqref{eq:result_non_batch} is the result obtained by \cite{KazerouniGAR17}. But, at the beginning of the first phase, \ie when $k = 0$, we have that for $t\leq \frac{\alpha \mu_{l}}{(1-\alpha) \mu_{h} + \alpha \mu_{l}}T$ that :
\begin{align}\label{eq:free_exploration}
\alpha \mu_{l} (T - t) \geq (1- \alpha) \mu_{h} \geq (1-\alpha) \sum_{l = 1}^{t} \mu_{b_{l}}^{l}
\end{align} 
Equation \eqref{eq:free_exploration} implies that at the beginning of the algorithm the conservative condition is satisfied for every possible arm. Thus $\big|S_{\tau-1}^{\star}\big| + \big| S_{\tau-1}^{+}\big| \geq \frac{\alpha \mu_{l}}{(1-\alpha) \mu_{h} + \alpha \mu_{l}}T$.
Therefore, if $ \tilde{T}_{\delta}^{\alpha} := \frac{\alpha \mu_{l}}{(1-\alpha) \mu_{h} + \alpha \mu_{l}}T \geq C_{b}(\alpha, \mu_{l}, \delta)$, we can upper-bound $f\Big(\big|S_{\tau-1}^{\star}\big| + \big| S_{\tau-1}^{+}\big| + 1\Big)$ by the value of $f$ evaluated at  $\frac{\alpha \mu_{l}}{(1-\alpha) \mu_{h} + \alpha \mu_{l}}T$ that is to say we have:
\begin{align}
\frac{\alpha}{2}\left( \big|S_{\tau}^{b}\big| + \big| S_{\tau-1}^{+}\big|\right) \mu_{l} \leq & - \frac{\alpha\mu_{l}}{2} \Big( T_{\alpha}^{\delta} + 1\Big) + \mu_{h}+ 4d\left( \sqrt{\lambda} B + 2\sigma + \frac{2}{3}\right)\sqrt{ T_{\alpha}^{\delta} + 1}\log\left( \frac{D_{0}}{\delta}\Big(T_{\alpha}^{\delta} + 1\Big) \right) 
\end{align}
And, on the other hand  $\alpha \;\big|S_{\tau}^{b}\big|\;\mu_{l} \geq 0$, we have :
\begin{align*}
\frac{\alpha}{2}\left( \big|S_{\tau}^{b}\big| + \big| S_{\tau-1}^{+}\big|\right) \mu_{l}  \leq \max\Bigg\{ - \frac{\alpha\mu_{l}}{2} \Big( T_{\alpha}^{\delta} + 1\Big) + \mu_{h}+ 4d\left( \sqrt{\lambda} B + 2\sigma + \frac{2}{3}\right)\sqrt{ T_{\alpha}^{\delta} + 1}\times \log\left( \frac{D_{0}}{\delta}\Big(T_{\alpha}^{\delta} + 1\Big) \right)  ,0\Bigg\}&
\end{align*}

\end{proof}

Combining the result of the lemma with \linucb regret bound provides the final result
	\begin{itemize}
		\item If $\tilde{T}_{\delta}^{\alpha} \geq C_{b}(\alpha, \mu_{l}, \delta)$ :
		\begin{align*}
		R(n) \leq \frac{\Delta_{h}}{\alpha \mu_{l}} \max\Bigg\{ - \frac{\alpha\mu_{l}}{2} \Big( T_{\alpha}^{\delta} + 1\Big) + \mu_{h}+ 4d\left( \sqrt{\lambda} B + 2\sigma + \frac{2}{3}\right)\sqrt{ T_{\alpha}^{\delta} + 1}
\log\left( \frac{D_{0}}{\delta}\Big(T_{\alpha}^{\delta} + 1\Big) \right)  ,0\Bigg\}&  \nonumber \\
		 +  4\sqrt{nd\log\left( 1 + \frac{nD^{2}}{\lambda d}\right)}\times\Bigg[\sqrt{\lambda} B + \sigma\sqrt{2\log\left(\frac{1}{\delta}\right) + d\log\left( 1 + \frac{nD^{2}}{\lambda d}\right)}\Bigg];& \nonumber\\
		\end{align*}
		\item else :
		\begin{align*}
		R(n) \leq 4\sqrt{nd\log\left( 1 + \frac{nD^{2}}{\lambda d}\right)}\Bigg[\sqrt{\lambda} B +\sigma\sqrt{2\log\left(\frac{1}{\delta}\right) + d\log\left( 1 + \frac{nD^{2}}{\lambda d}\right)}\Bigg] \nonumber\\
		+\frac{57\Delta_{h}d^{2}}{(\alpha \mu_{l})^{2}}\left( 2/3 + \sqrt{\lambda}B + 2\sigma\right)^{2}\times\log\left( \frac{44d^{2}\sqrt{D_{0}}}{\sqrt{\delta}\alpha \mu_{l}}\left( 2/3 + \sqrt{\lambda}B + 2\sigma\right)^{2}\right)^{2}.
		\end{align*}
	\end{itemize}

\section{Appendix B: Experiments}\label{sec:experiements}

\subsection{Worst Case Model for Synthetic Data}
In this section, we present the protocol used to choose which model is used to present the results on synthetic data. To generate Figure ~\ref{fig:synthetic}, we have drawn $n_m$ random bandit models on which each algorithm was ran $n_s$ times.

 In order to show the improvement of \ccucb or \cclinucb over their counterparts \cucb and \clinucb, we have selected the model in which the difference between the regret of \ccucb and of \cucb at $n$ is the smallest. More formally, if $R_{\text{\ccucb}}^{m}(n)$ is the empirical regret of \ccucb after $n$ steps in the bandit model $m$ averaged over $n_s$ runs, $R_{\text{\cucb}}^{m}(n)$ the same for \cucb, and $\mathcal{M}$ the set of models used for the experiments. The model, $m^{\star}$, selected for Figure \ref{fig:synthetic} is such that
\begin{align}\label{eq:model_selection}
1 - \frac{R^{m^{\star}}_{\text{\ccucb}}(n)}{ R^{m^{\star}}_{\text{\cucb}}(n)}  = \min_{m\in \mathcal{M}} 1 - \frac{R^{m}_{\text{\ccucb}}(n)}{ R^{m}_{\text{\cucb}}(n)}  
\end{align}

As algorithm \ccucb achieves consistently better regret than \cucb, the quantities involved in \eqref{eq:model_selection} are positive and thus selecting the minimum effectively gives the worst model in terms of improvement \wrt to \cucb. Empirically, the difference between the regrets was indeed positive for each model. We follow the same protocol for the linear case by comparing the performance of \cclinucb and \clinucb.

\subsection{Multi-Armed Bandits}

\begin{figure}[t]
    \centering
    \begin{subfigure}[b]{0.3\textwidth}
        \includegraphics[width=\textwidth]{./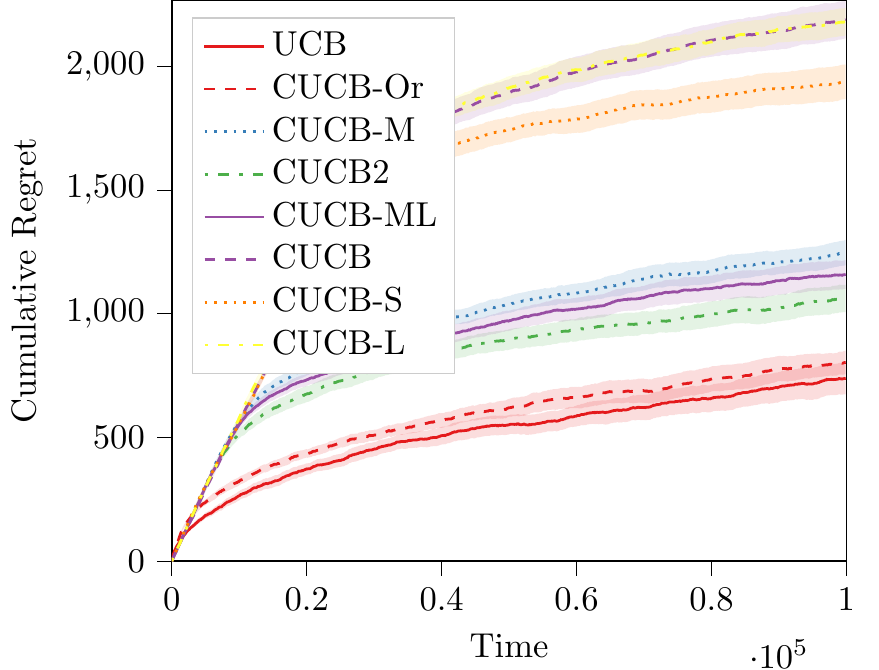}
        \caption{Worst Regret}
        \label{fig:mab_worse_regret}
    \end{subfigure}
    \begin{subfigure}[b]{0.3\textwidth}
        \includegraphics[width=\textwidth]{./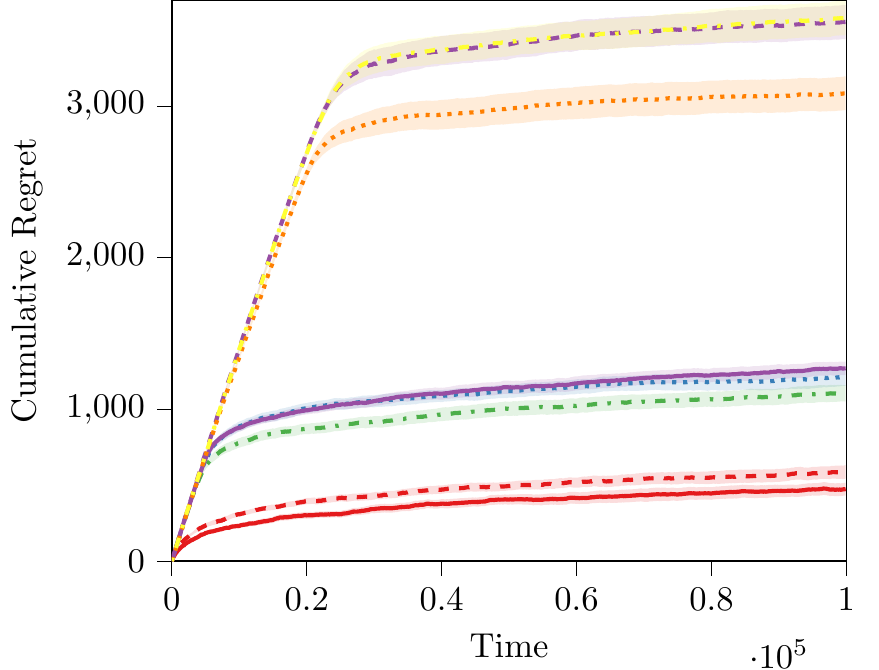}
        \caption{Best Regret}
        \label{fig:mab_best_regret}
    \end{subfigure}
    \begin{subfigure}[b]{0.3\textwidth}
            \includegraphics[width=\textwidth]{./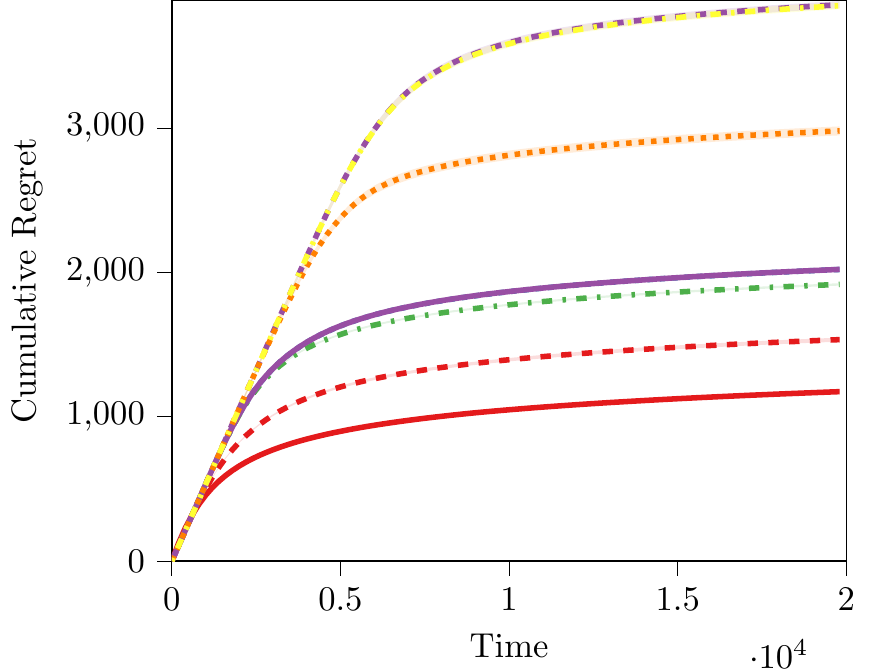}
        \caption{Average Regret}
        \label{fig:mab_avg_regret}
    \end{subfigure}\\
        \begin{subfigure}[b]{0.3\textwidth}
        \includegraphics[width=\textwidth]{./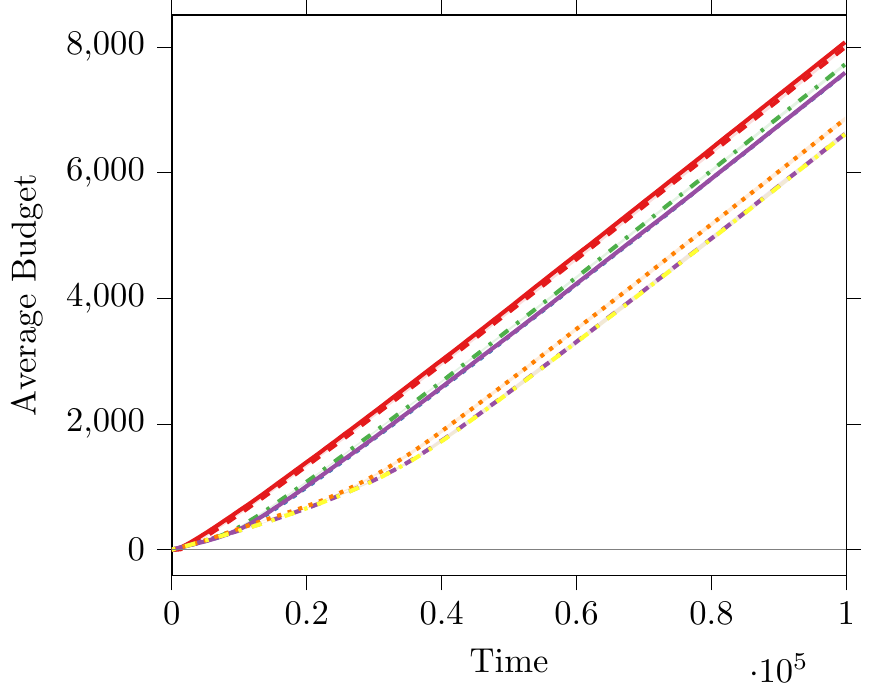}
        \caption{Budget (worst model)}
        \label{fig:mab_worse_margin}
    \end{subfigure}
    \begin{subfigure}[b]{0.3\textwidth}
        \includegraphics[width=\textwidth]{./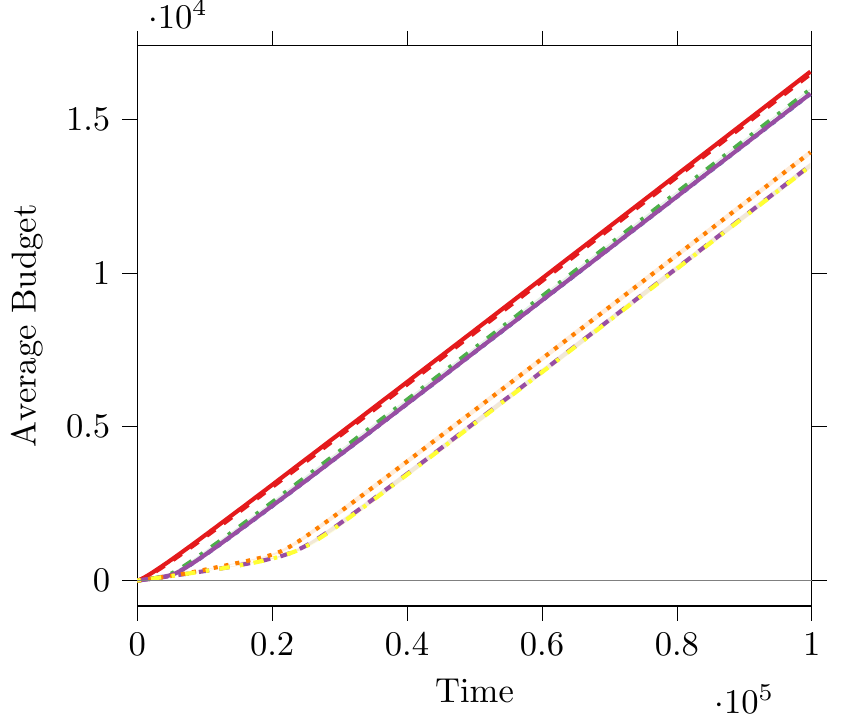}
        \caption{Budget (best model)}
        \label{fig:mab_best_margin}
    \end{subfigure}
    \begin{subfigure}[b]{0.3\textwidth}
            \includegraphics[width=\textwidth]{./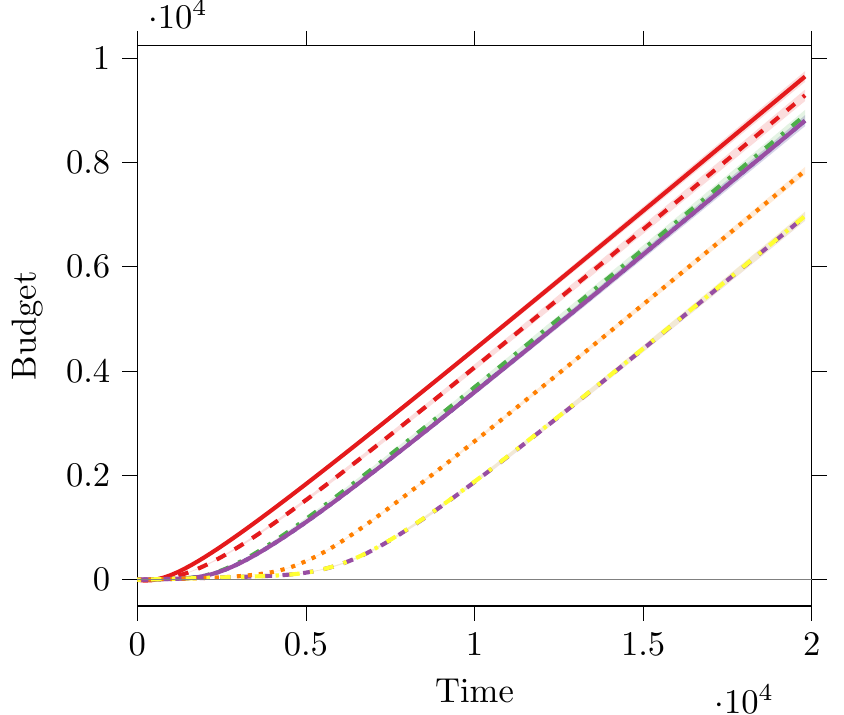}
            \caption{Budget (average model)}
        \label{fig:mab_avg_margin}
    \end{subfigure}\\
    \begin{subfigure}[b]{0.3\textwidth}
        \includegraphics[width=\textwidth]{./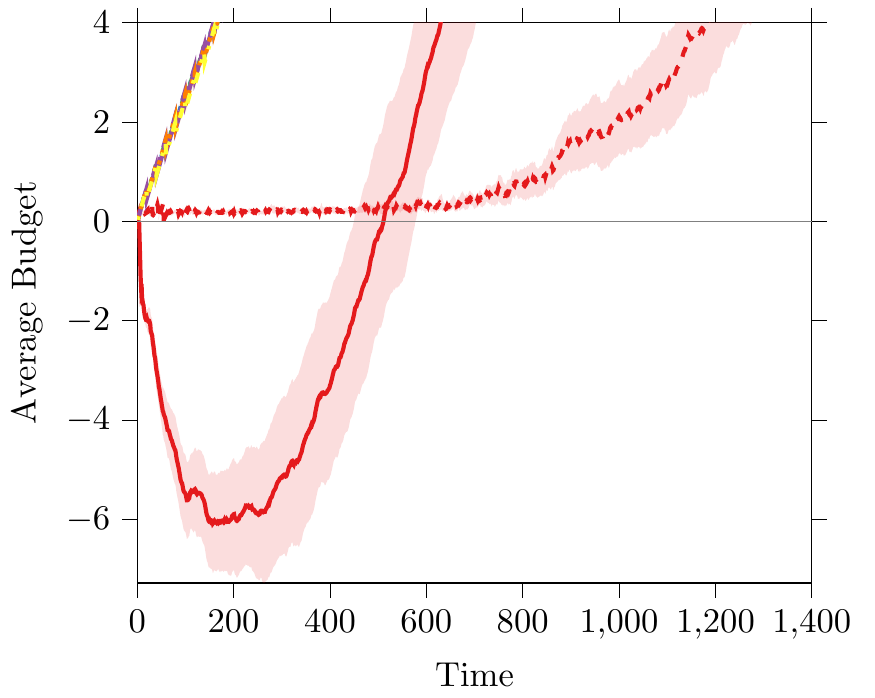}
        \caption{Budget (worst model) zoomed}
        \label{fig:mab_worse_nmargin}
    \end{subfigure}
    \begin{subfigure}[b]{0.3\textwidth}
        \includegraphics[width=\textwidth]{./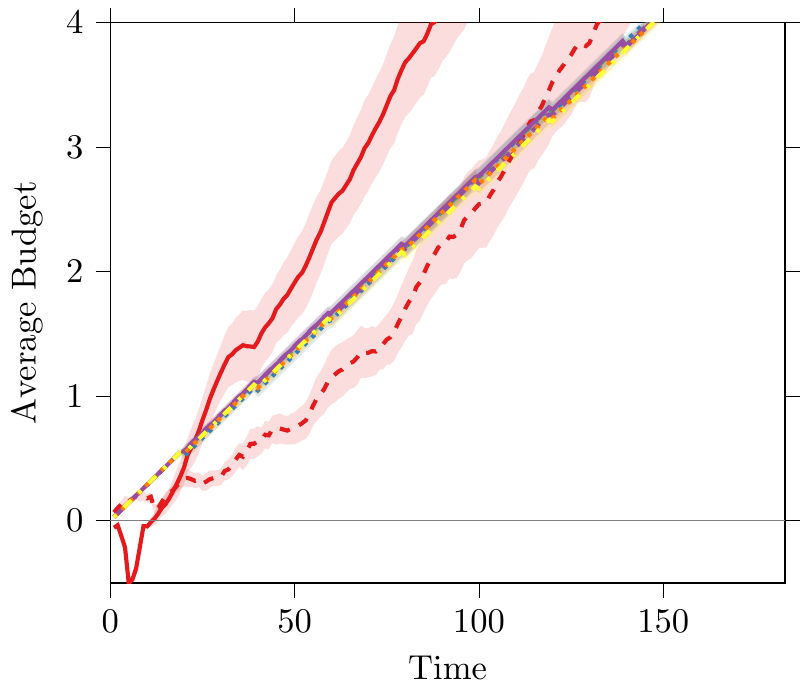}
        \caption{Budget (best model) zoomed}
        \label{fig:mab_best_nmargin}
    \end{subfigure}
    \begin{subfigure}[b]{0.3\textwidth}
            \includegraphics[width=\textwidth]{./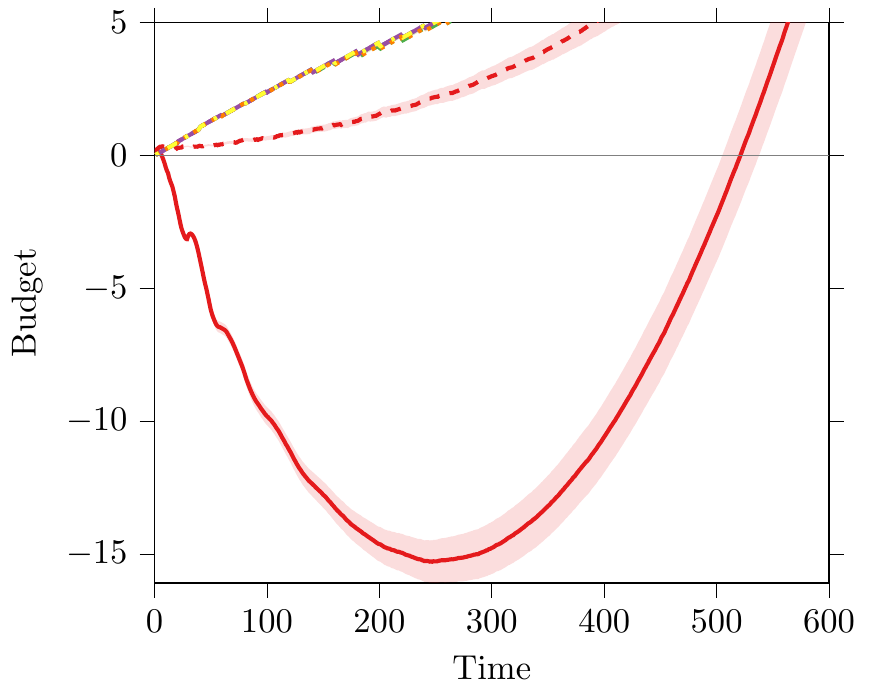}
            \caption{Budget (average model) zoomed}
        \label{fig:mab_avg_nmargin}
    \end{subfigure}
    \caption{We report the regret and the budget for the worst, best and average model in the Bernoulli experiment.}
    \label{fig:bernoulli_regbudget}
\end{figure}

In order to give an idea about the difference of performance across different bandit problems, we report the regret for the worst model, best model and average of model (see Fig.~\ref{fig:bernoulli_regbudget}).
The best model is obtained by changing $\min$ to a $\max$ in Eq.~\ref{eq:model_selection}. 
Notice that the best model and the average one are very similar, meaning that the distribution of the results is very concentrated close to the best one.

We also report the average violation of the conservative constraint. More precisely, for an algorithm, $\mathcal{A}$,  which pulled arms $(a_{1},\hdots, a_{t})$, the \emph{exact} budget is defined as
\begin{align*}
B_{\mathcal{A}}(t) = \sum_{l = 1}^{t} \Big(\mu^{l}_{a_{l}} - (1-\alpha)\mu^{l}_{b}\Big)
\end{align*}
This quantity is what conservative algorithms like \ccucb is constrained to keep positive at every step, while \ccucbc is constrained to keep budget positive at certain predefined checkpoint. On the other hand \ucb does not constraint the budget at all.  
We focus on the time steps where the budget is negative for \ucb and for \ccucbc algorithms.\footnote{Note that the budget is negative since these algorithms are not constrained to be safe uniformly in time but only at the checkpoints.} As it can be noticed the other conservative algorithms trade-off some level of performance (regret) in order to be safe \wrt the baseline. 

Figures~\ref{fig:mab_worse_margin}-\subref{fig:mab_best_margin}-\subref{fig:mab_avg_margin} show the budget of the worst, best and the average over models until $t = n$. Note that all the lines are parallel meaning that all the algorithms have reached nearly optimal policies.
As the reader may notice, the better the algorithm the higher the budget. This is due to the fact that better algorithms tends to explore better (and/or faster) and quickly discard suboptimal arms.
Figure~\ref{fig:bernoulli_regbudget} also reports the time steps where the budget is negative for \ucb. In particular, as pointed out in the introduction, \ucb explores heavily at the beginning which leads to potentially large violation of the constraint, see Fig.~\ref{fig:mab_worse_margin}. On the other hand, the conservative algorithms never violates the conservative constraint.  

\subsection{Synthetic Linear Data}

\begin{figure}[t]
    \centering
    \begin{subfigure}[b]{0.3\textwidth}
        \includegraphics[width=\textwidth]{./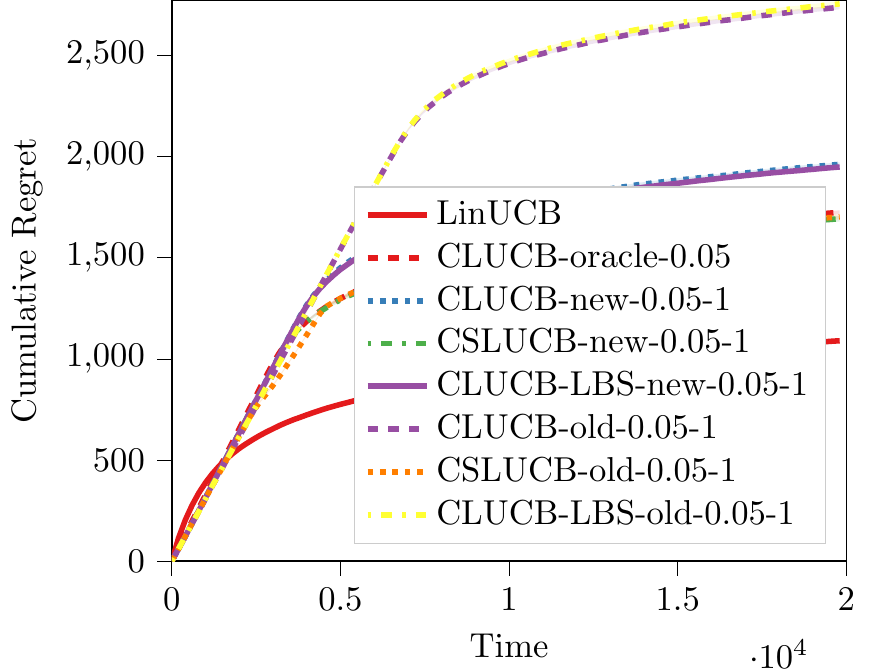}
        \caption{Worst Regret}
        \label{fig:lin_worse_regret}
    \end{subfigure}
    \begin{subfigure}[b]{0.3\textwidth}
        \includegraphics[width=\textwidth]{./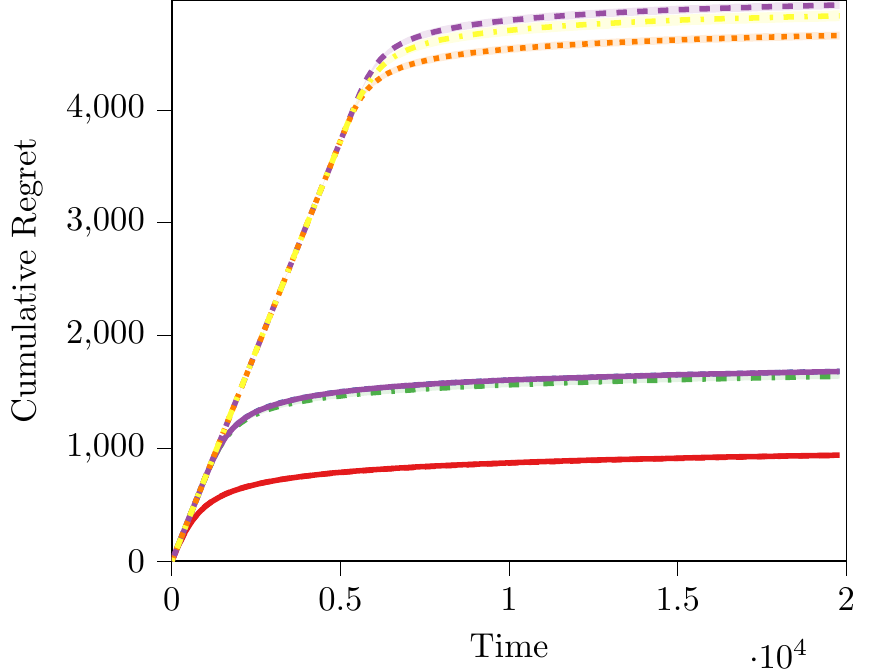}
        \caption{Best Regret}
        \label{fig:lin_best_regret}
    \end{subfigure}
    \begin{subfigure}[b]{0.3\textwidth}
            \includegraphics[width=\textwidth]{./avg_regret.pdf}
        \caption{Average Regret}
        \label{fig:lin_avg_regret}
    \end{subfigure}\\
    \begin{subfigure}[b]{0.3\textwidth}
            \includegraphics[width=\textwidth]{./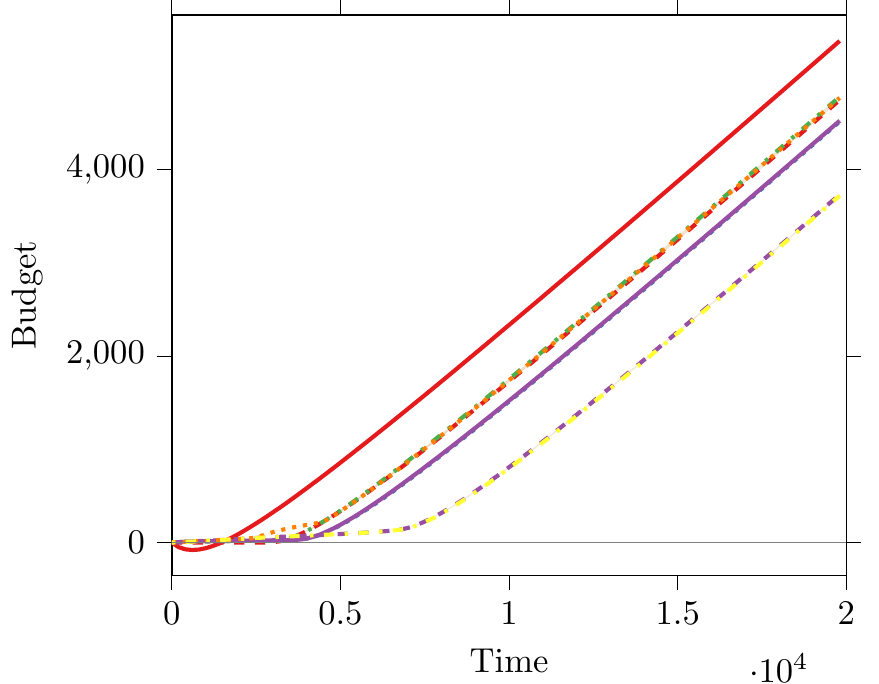}
        \caption{Budget (worst model)}
        \label{fig:linear_worse_margin}%
    \end{subfigure}
    \begin{subfigure}[b]{0.3\textwidth}
        \includegraphics[width=\textwidth]{./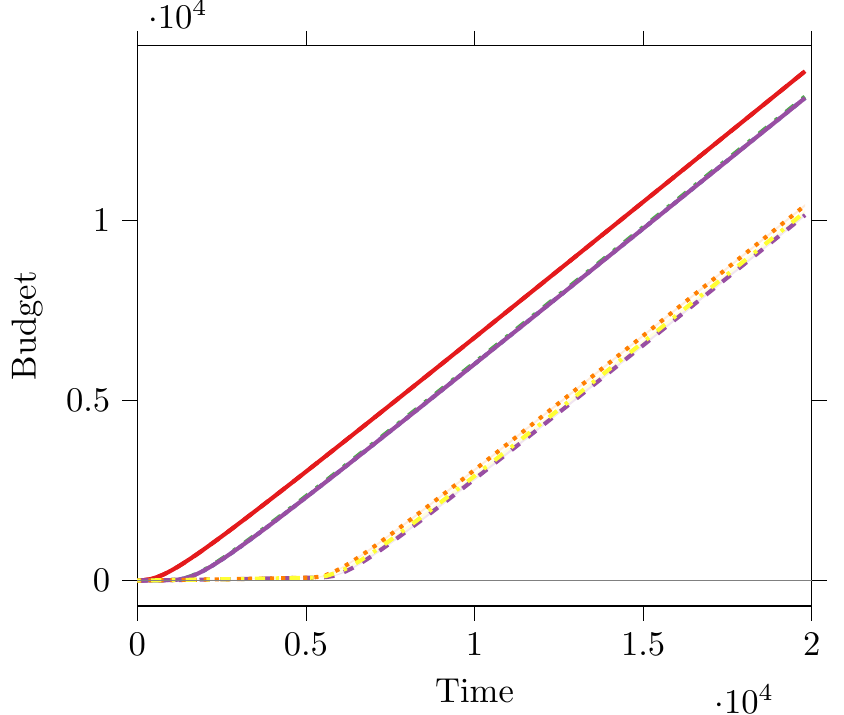}
        \caption{Budget (best model)}
        \label{fig:linear_best_margin}
    \end{subfigure}
    \begin{subfigure}[b]{0.3\textwidth}
            \includegraphics[width=\textwidth]{./avg_margin_long.pdf}
            \caption{Budget (average model)}
            \label{fig:linear_avg_margin}
    \end{subfigure} \\
    \begin{subfigure}[b]{0.3\textwidth}
        \includegraphics[width=\textwidth]{./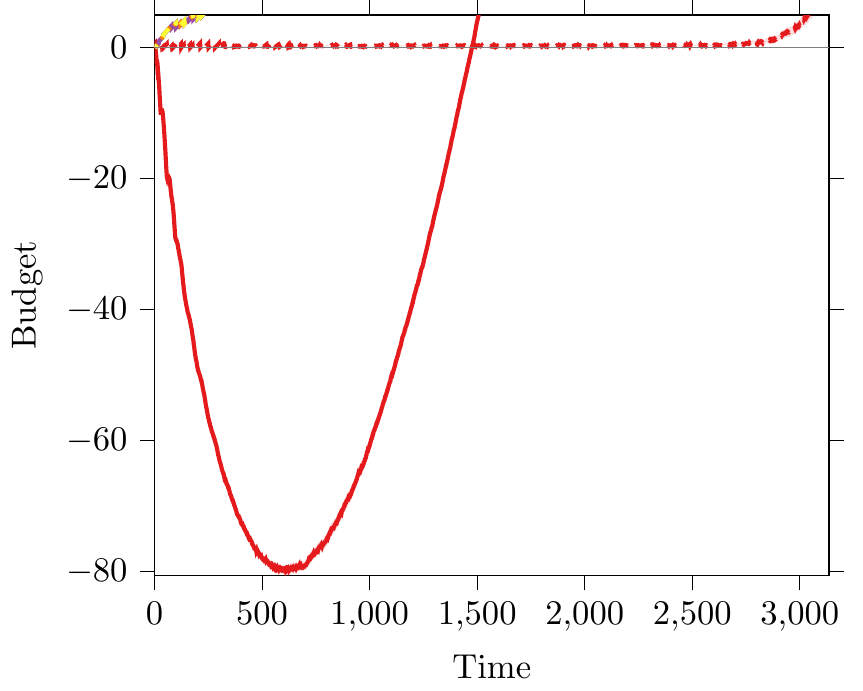}
        \caption{Budget (worst model) zoomed}
        \label{fig:linear_worse_nmargin}
    \end{subfigure}
    \begin{subfigure}[b]{0.3\textwidth}
        \includegraphics[width=\textwidth]{./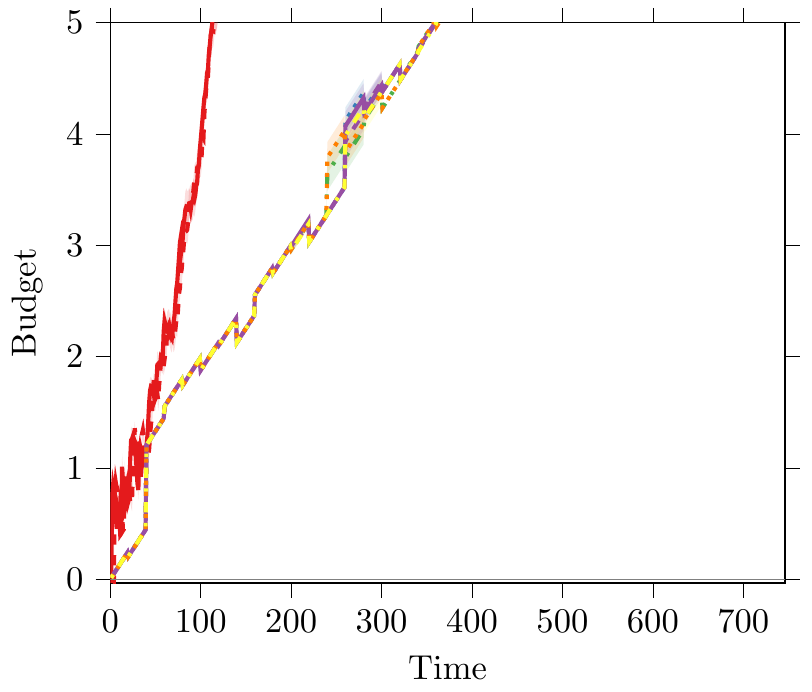}
         \caption{Budget (best model) zoomed}
        \label{fig:linear_best_nmargin}
    \end{subfigure}
    \begin{subfigure}[b]{0.3\textwidth}
            \includegraphics[width=\textwidth]{./avg_margin.pdf}
            \caption{Budget (average model) zoomed}
        \label{fig:linear_avg_nmargin}
    \end{subfigure}
    \caption{We report the regret and the budget for the worst, best and average model in the linear bandit experiment.}
    \label{fig:linear_regbudget}
\end{figure}
 
 We present the regret and budget for the best, average and worst model on different linear bandit problems.  Figure~\ref{fig:linear_regbudget} shows the regret and the budget for \clinucb, \cclinucb and different ablated algorithms used in the experiments. Figure \ref{fig:lin_worse_regret} shows that in the worst case, the algorithms using the action selection process introduced in this paper outperforms \clinucb-OR which can be surprising. However, the latter is an oracle with respect to algorithms using a two stage action selection process. Thus, Figure~\ref{fig:lin_worse_regret} shows that the introduction of a new action selection process can lead to major improvement in regards of the regret of conservative algorithms. The other possible comment is that the introduction of a martingale based concentration inequality does not lead to significant improvement in that specific case. This indicates that in the linear setting, the impact of the choice of the lower bound is less flagrant than for multi-armed bandit because the lower bound used by \clinucb in in some way takes into account of the correlation between arms. 
  
  However, looking at Figure~\ref{fig:lin_avg_regret}, it is clear that in average changing the lower bound does impact positively the regret and that the worst case presented here is a corner case. Figures~\ref{fig:linear_worse_nmargin}-\subref{fig:linear_best_nmargin}-\subref{fig:linear_avg_nmargin} shows the violation of the constraint by \linucb and are similar to the multi-armed bandit case, in the sense that \linucb explore heavily at the beginning of the problem which leads to large violation of the conservative constraint.
  
\subsection{Jester Dataset}
For the Jester~\cite{goldberg2001eigentaste} experiment, we consider the standard linear setting. We performed a matrix factorization of the ratings (after filtering over users and jokes). This provides features for the arms and users, the reward (\ie rating) is the dot product between the arm and user features (we make it stochastic by adding Gaussian noise). We consider a cold start problem where the user is randomly selected at the beginning of the repetition and the agent has to learn the best arm to recommend. When an arm is selected by the algorithm, its reward is computed as the dot product between the arm and user features.

We report the budget $B_{\mathcal{A}}(t)$ in the case of Jester dataset.
We report the average over all the users and simulations.
Fig.~\ref{fig:jester_margin}(\emph{left}) shows that \linucb and the checkpoint-based algorithms violates the \emph{one-step} budget in the initial phase.
\cclinucbc follows the exploratory behaviour of \linucb until the conservative condition~\eqref{eq:multi_step_condition} forces them to revert to a conservative behavior by playing the baseline.
If we observe the long-term behavior (Fig.~\ref{fig:jester_margin}(\emph{right})), all the lines are parallel, meaning that the algorithms have converged to nearly optimal policies. Second, \linucb is the one building the higher budget since it is the first one to converge toward optimal arms. The other algorithms are ordered accordingly to their regret performance.
\begin{figure}[h]
        \centering
        \includegraphics[width=.45\textwidth]{./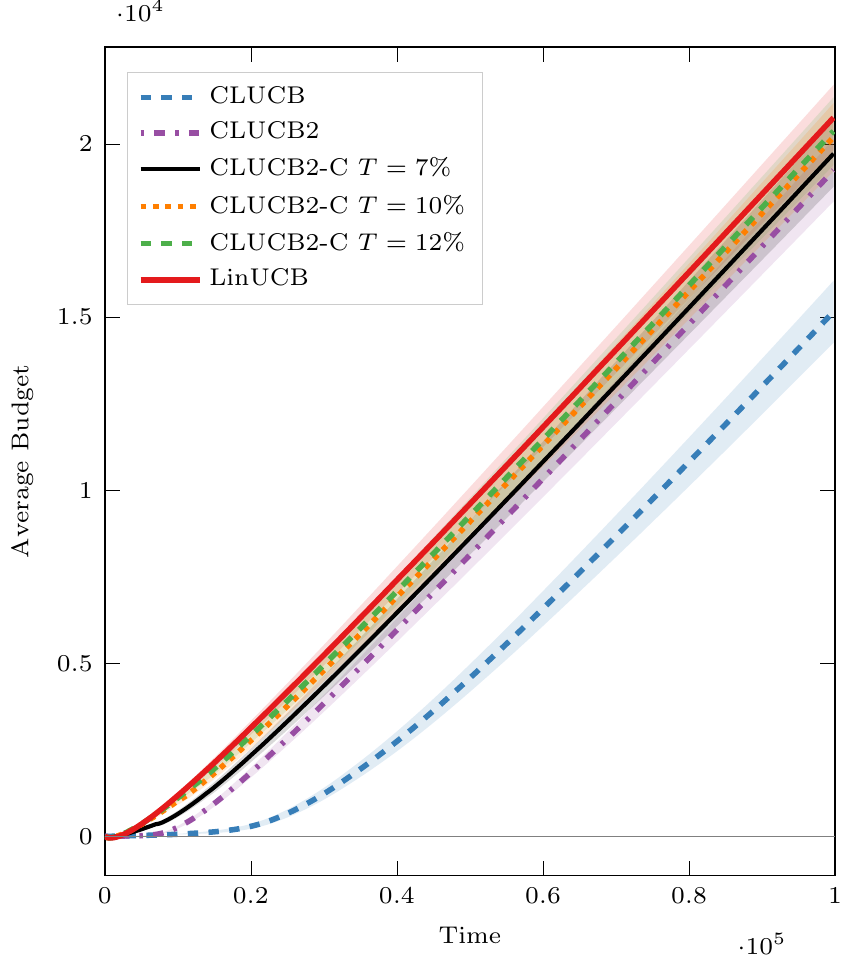}
        \includegraphics[width=.45\textwidth]{./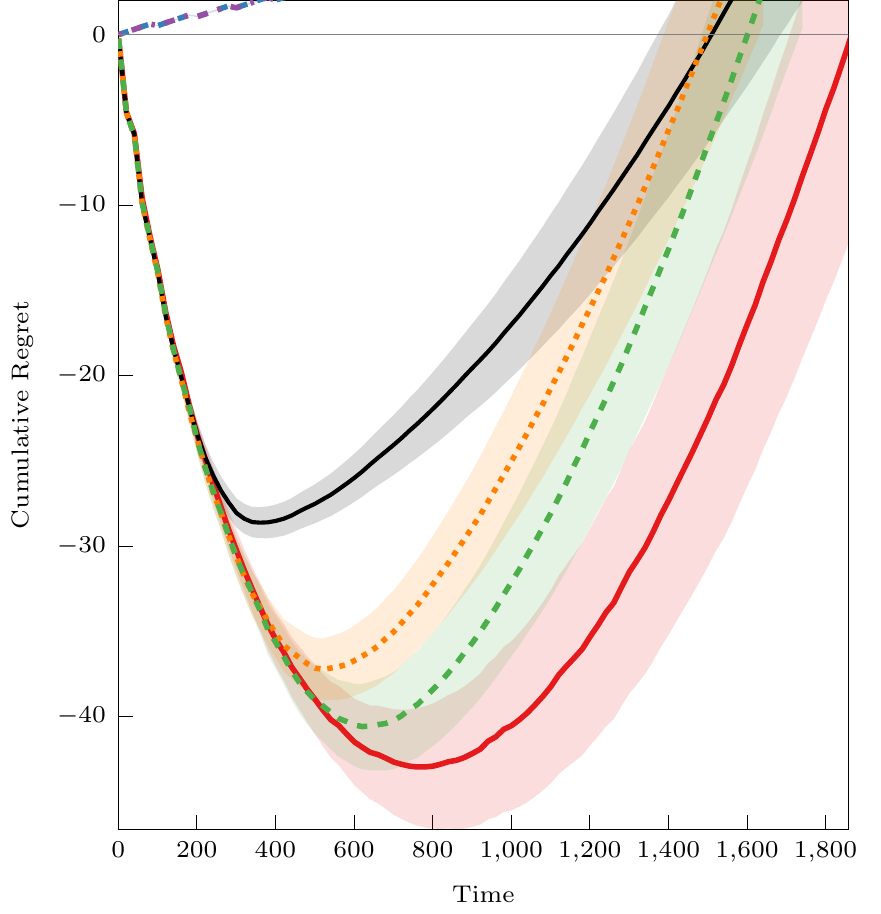}
        \caption{Budget as a function of time for the Jester dataset. The figure shows the average budget over users and repetitions.}
        \label{fig:jester_margin}
\end{figure}

\end{document}